\documentclass{article}

\usepackage{microtype}
\usepackage{graphicx}
\usepackage{subcaption}
\usepackage{booktabs} 

\usepackage{hyperref}

\usepackage[utf8]{inputenc} 
\usepackage[T1]{fontenc}    
\usepackage{hyperref}       
\usepackage{url}            
\usepackage{booktabs}       
\usepackage{amsfonts}       
\usepackage{nicefrac}       
\usepackage{microtype}      
\usepackage{xcolor}         
\usepackage{graphicx}
\usepackage{amsmath}
\usepackage{float} 
\usepackage{amssymb}

\usepackage{adjustbox}
\usepackage{graphicx} 
\usepackage{pifont}  
\usepackage{enumitem}
\usepackage{tabularx}

\usepackage{longtable}  
\usepackage{booktabs}   
\usepackage{amsmath}    
\usepackage{amssymb}    

\usepackage{booktabs} 
\usepackage{multirow} 

\usepackage{graphicx} 
\usepackage{amssymb}
\usepackage{amsthm}
\setlist{nosep}

\usepackage{xcolor}
\usepackage[dvipsnames]{xcolor}

\usepackage[preprint]{icml2026}

\usepackage{amsmath}
\usepackage{amssymb}
\usepackage{mathtools}
\usepackage{amsthm}

\usepackage[capitalize,noabbrev]{cleveref}

\theoremstyle{plain}
\newtheorem{theorem}{Theorem}[section]
\newtheorem{proposition}[theorem]{Proposition}
\newtheorem{lemma}[theorem]{Lemma}
\newtheorem{corollary}[theorem]{Corollary}
\theoremstyle{definition}
\newtheorem{definition}[theorem]{Definition}
\newtheorem{assumption}[theorem]{Assumption}
\theoremstyle{remark}

\usepackage[textsize=tiny]{todonotes}

\icmltitlerunning{Understanding Generalization from Embedding Dimension and Distributional Convergence}

\begin{document}

\twocolumn[
  \icmltitle{Understanding Generalization from Embedding Dimension and Distributional Convergence}



  \icmlsetsymbol{equal}{*}

  \begin{icmlauthorlist}
    \icmlauthor{Junjie Yu}{equal,sustech}
    \icmlauthor{Zhuoli Ouyang}{equal,sustech_2}
    \icmlauthor{Haotian Deng}{equal,sustech}
    \icmlauthor{Chen Wei}{sustech}
    \icmlauthor{Wenxiao Ma}{sustech}
    \icmlauthor{Jianyu Zhang}{sustech}
    \icmlauthor{Zihan Deng}{sustech}
    \icmlauthor{Quanying Liu}{sustech}
  \end{icmlauthorlist}

  \icmlaffiliation{sustech}{Department of Biomedical Engineering, Southern University of Science and Technology}
  \icmlaffiliation{sustech_2}{Department of Electronic and Electrical Engineering, Southern University of Science and Technology}

  \icmlcorrespondingauthor{Quanying Liu}{liuqy@sustech.edu.cn}

  \icmlkeywords{Machine Learning, ICML}

  \vskip 0.3in
]



\printAffiliationsAndNotice{\icmlEqualContribution}

\begin{abstract}
Deep neural networks often generalize well despite heavy over-parameterization, challenging classical parameter-based analyses.
We study generalization from a representation-centric perspective and analyze how the geometry of learned embeddings controls predictive performance for a fixed trained model.
We show that population risk can be bounded by two factors: 
(i) the intrinsic dimension of the embedding distribution, which determines the convergence rate of empirical embedding distribution to the population distribution in Wasserstein distance, and 
(ii) the sensitivity of the downstream mapping from embeddings to predictions, characterized by Lipschitz constants.
Together, these yield an embedding-dependent error bound that does not rely on parameter counts or hypothesis class complexity.
At the final embedding layer, architectural sensitivity vanishes and the bound is dominated by embedding dimension, explaining its strong empirical correlation with generalization performance.
Experiments across architectures and datasets validate the theory and demonstrate the utility of embedding-based diagnostics.

\end{abstract}


\section{Introduction}

Deep networks can generalize effectively even in strongly overparameterized regimes, a phenomenon that remains difficult to explain using classical capacity-based theories.
Traditional approaches based on VC dimension \citep{vapnik1994measuring, sontag1998vc} or Rademacher complexity \citep{truong2022rademacher} provide important theoretical insights, but often become vacuous at modern scales.
A key limitation of these frameworks is their primary focus on the parameter space: as model size grows, capacity measures typically scale with the number of parameters, rendering generalization guarantees increasingly loose and obscuring the mechanisms that enable large models to generalize in practice.

These challenges have motivated a growing interest in the geometry of learned hidden representations. Unlike raw parameters, deep representations are the outcome of the combined effects of data, architecture and optimization, and thus provide a functional view of the model after training. Crucially, representational analyses are less sensitive to model scale: embeddings from models of different sizes or architectures can be projected into a common ambient space (e.g., via PCA), enabling controlled comparisons that isolate geometric structure rather than parameter count. This makes representation geometry a particularly attractive lens for studying generalization across architectures and training regimes. Prior work has linked geometric properties such as clustering, separability, and consistency to generalization performance \citep{davies2009cluster, belcher2020generalisation, dyballa2024separability}. However, many existing metrics rely on label information or task-specific supervision, limiting their applicability in unsupervised or self-supervised representation learning settings.

A particularly promising direction is the study of \emph{intrinsic dimension}, a label-free measure of the geometric complexity of learned embeddings. Although modern models produce high-dimensional embeddings, empirical studies consistently find that these representations concentrate on low-dimensional structure \citep{ansuini2019intrinsic, pope2021intrinsic}. Moreover, lower intrinsic dimension has been repeatedly associated with improved generalization across architectures and training paradigms. Despite these observations, a principled explanation of why and how representation dimensionality governs generalization remains lacking. This gap motivates our work to develop a theoretical account of how the dimensional structure of learned representations influences generalization.

\begin{figure*}[!t]
  \centering
  \includegraphics[width=0.85\linewidth]{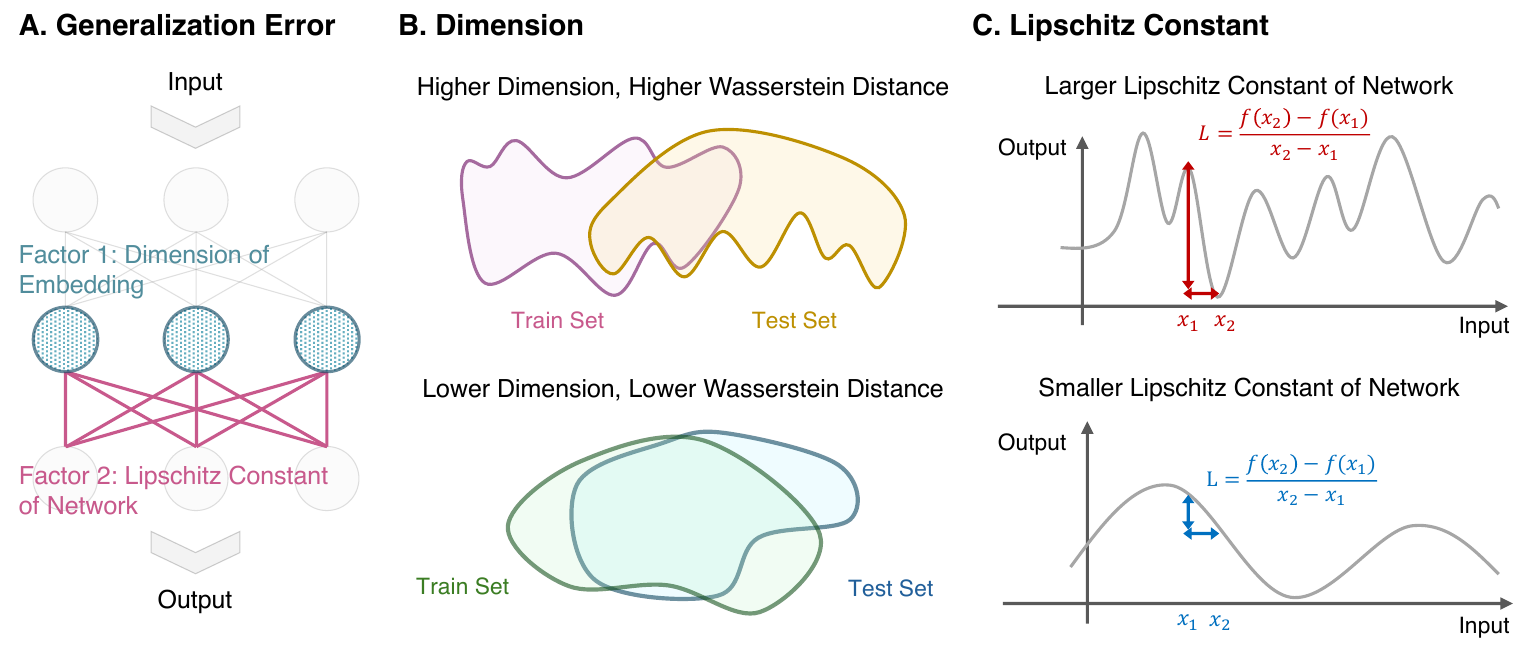}
  \caption{\textbf{Embedding Dimension and Lipschitz Constant of Network Jointly Influence Generalization Error.}
  \textbf{(A)}Generalization error depends jointly on embedding dimension and network's Lipschitz constant.
  \textbf{(B)} Lower intrinsic dimension accelerates convergence of empirical to population distribution.
  \textbf{(C)} Smaller Lipschitz constants reduce output sensitivity to perturbations.
  }
  \label{fig:motivation}
\end{figure*}

We address this question by deriving a \emph{post-training generalization bound} that makes the role of embedding dimension explicit (Figure~\ref{fig:motivation}). 
Importantly, our analysis conditions on a \emph{fixed, trained model} $F$ and does not characterize the behavior of a learning algorithm or hypothesis class. Instead, it quantifies how well the empirical distribution of learned representations approximates the true population distribution, and how this approximation error propagates to the loss.

Building on sharp Wasserstein convergence results \citep{weed2019sharp}, we show that for a fixed trained model $F$, each layer $k$ with intrinsic dimension $d_k$, sensitivity $\bar L_k$, and data-dependent constants $C_k, D_k$, the population risk $R(F)$ and empirical risk $\hat R_n(F)$ satisfy, with high probability,

\begin{align*}
R(F) \;\lesssim\; & \hat R_n(F) 
+ \bar L_k \Big(
C_k\, n^{-1/(d_k+\epsilon)} 
+ D_k \sqrt{\tfrac{1}{n}\log\tfrac{L}{\delta}}
\Big).
\end{align*}

Here we show the main form of the bound, lower-order terms are omitted for clarity.
The term $n^{-1/(d_k+\epsilon)}$ characterizes the rate at which the empirical embedding distribution converges to its population counterpart in Wasserstein distance. Consequently, for a fixed sample size, representations of lower intrinsic dimension yield more accurate distributional estimates, leading to a smaller generalization gap (Figure~\ref{fig:motivation}B). The factor $\bar L_k$ quantifies how discrepancies in the embedding distribution are amplified by downstream mappings and propagated to the loss (Figure~\ref{fig:motivation}C).

At the final layer, where the representation coincides with the model output, there is no downstream network mapping applied to the embeddings. As a result, the generalization error bound no longer depends on the architecture of subsequent layers. In this regime, the dominant term of the bound is governed by the intrinsic dimension of the output representation. Consequently, for the final layer, generalization is driven primarily by the geometry of the output space, providing a theoretical explanation for why final-layer dimension is often a strong empirical predictor of generalization
performance.

Our contributions are threefold:
\begin{enumerate}[leftmargin=*]
  \item We derive a high-probability \emph{post-training generalization bound} for a fixed model that makes the $n^{-1/(d+\epsilon)}$ dependence on intrinsic dimension explicit.
  \item We show that at the final layer the bound simplifies substantially, providing a theoretical explanation for why final-layer dimension is a strong empirical predictor of generalization.
  \item Through extensive analyses of large-scale vision and language models, we empirically verify that the predicted scaling behavior persists in realistic regimes with large models and datasets.
\end{enumerate}

\section{Related Works}

\paragraph{Classical Generalization Bounds.}
Theoretical analyses of generalization in deep learning have traditionally centered on parameter-space complexity, including VC-dimension and Rademacher complexity bounds \citep{sain1996nature, bartlett2002rademacher}, which provide worst-case guarantees based on the number of parameters. While refinements such as margin-based and norm-based bounds \citep{bartlett2017spectrally, neyshabur2015norm, neyshabur2017pac} yield tighter estimates by incorporating weight norms or spectral properties, they become vacuous in the context of modern overparameterized networks. PAC-Bayesian approaches \citep{arora2018stronger, hellstrom2025generalization, lotfi2022pac}
derive non-vacuous generalization bounds by controlling the divergence between
a data-independent prior and a data-dependent posterior distribution over
parameters. 
While these bounds can be significantly tighter in overparameterized regimes, their practical effectiveness depends critically on the choice of prior, which remains a central challenge in applying PAC-Bayesian theory to deep networks.
Overall, these theories are primarily parameter-centric and offer limited insight into how the geometric of learned representations influences generalization, despite strong empirical evidence that representation properties such as intrinsic dimension are closely tied to generalization performance.

\paragraph{Representation-Based Approaches.}  
Recent research has increasingly focused on the impact of embedding geometry on generalization. Key approaches analyze properties such as Consistency and Separability of Representations \citep{davies2009cluster, dyballa2024separability, belcher2020generalisation}. These geometric metrics offer improved interpretability and are less reliant on model scale. However, they require labeled data, limiting their applicability in scenarios like pretraining or self-supervised learning, where label information is unavailable.

\paragraph{Intrinsic Dimension of Representations.}  
A recent and promising direction in representation-based analysis is the study of \emph{intrinsic dimension}, which quantifies the complexity of embeddings. This approach aligns with the growing understanding that models implicitly compress input data during learning: a lower intrinsic dimension signifies stronger compression and has been empirically linked to improved generalization \citep{ansuini2019intrinsic, pope2021intrinsic}. While this offers a quantitative measure of how representations condense information, the theoretical mechanisms connecting intrinsic dimension to generalization remain largely unexplored.

\section{Preliminaries}
\label{sec:preliminaries}

This section introduces the key concepts, definitions and assumptions that connect representation geometry to generalization. All technical variants and detailed proofs are deferred to the appendix.

\subsection{Measures and Wasserstein distance}

\begin{definition}[Empirical measure]
Let $\mu$ be a probability distribution on a metric space $(X,d)$. 
Given $n$ i.i.d.\ samples $\{x_i\}_{i=1}^n \sim \mu$, the empirical distribution is
\[
\hat\mu_n = \frac{1}{n}\sum_{i=1}^n \delta_{x_i}.
\]
\end{definition}

\begin{definition}[Wasserstein distance]
Let $(X,d)$ be a Polish metric space, and let $\alpha,\beta$ be probability
measures on $X$ with finite first moments.
The Wasserstein distance between $\alpha$ and $\beta$ is defined as
\[
\mathcal{W}_1(\alpha,\beta)
=
\inf_{\gamma \in \Gamma(\alpha,\beta)}
\int_{X\times X} d(x,y)\, d\gamma(x,y),
\]
where $\Gamma(\alpha,\beta)$ denotes the set of probability measures on
$X\times X$ with marginals $\alpha$ and $\beta$.
\end{definition}

In our setting, $X=\mathcal{Z}_k$ is the embedding space equipped with a fixed
metric $d$, and $\mathcal{W}_1$ quantifies how well the empirical embedding
distribution $\hat P_{k,n}^Z$ approximates its population counterpart
$\tilde P_k^Z$.

\subsection{Network Decomposition and Embeddings}

\begin{definition}[Network decomposition]
\label{def:decomp}
At an intermediate layer $k$, the network is decomposed into an \emph{encoder}
$F_{\le k}:\mathcal{X}\to\mathcal{Z}_k$ mapping an input $x\in\mathcal{X}$ to an embedding
$z\in\mathcal{Z}_k$, and a \emph{tail map}
$F_k:\mathcal{Z}_k\to\mathbb{R}^C$ producing the final prediction.
The overall predictor can thus be written as
\[
F(x) = F_k\big(F_{\le k}(x)\big).
\]
\end{definition}

\begin{definition}[Empirical and population embedding distributions]
\label{def:emp-embed-dist}
Given $n$ i.i.d.\ samples $\{x_i\}_{i=1}^n \sim P_X$, the \emph{empirical embedding distribution} at layer $k$ is defined as
\[
\hat{\tilde P}_{k,n}^Z = \frac{1}{n}\sum_{i=1}^n \delta_{F_{\le k}(x_i)}.
\]
The \emph{population embedding distribution} $\tilde P_k^Z$ is the pushforward of $P_X$ under the embedding map,
\[
\tilde P_k^Z = \mathbb{E}_{x\sim P_X}\big[\delta_{F_{\le k}(x)}\big].
\]
\end{definition}

\noindent\textbf{Remark.}
In practice, we construct the empirical embedding distribution $\hat{\tilde P}_{k,n}^Z$ using embeddings from a held-out validation set, and approximate the population distribution $\tilde P_k^Z$ using embeddings from an independent test set. Since neither set is used for training, the resulting embeddings can be treated as approximately i.i.d.\ samples from their respective distributions. All representation-level quantities, including intrinsic dimension and Wasserstein distance, are computed based on these held-out embeddings rather than training-set embeddings, whose dependence on the learned parameters would violate the i.i.d.\ assumption.

\subsection{Lipschitz continuity}

\begin{definition}[Lipschitz map]
A function $f:(X,d_X)\to(Y,d_Y)$ is $L$-Lipschitz if
\[
d_Y\big(f(x),f(x')\big) \le L\, d_X(x,x')
\quad \text{for all } x,x'\in X.
\]
\end{definition}

The Lipschitz constant measures the sensitivity of the output to perturbations in the input and will be used to control how discrepancies in the embedding distribution propagate to the loss.

\subsection{Geometric Complexity and Wasserstein Convergence}

To characterize the convergence of empirical measures in Wasserstein distance, we adopt the geometric framework of \citet{weed2019sharp}.
The central insight of this framework is that Wasserstein convergence is governed
not by the ambient dimension of the embedding space, but by the intrinsic
geometric dimension of the underlying distribution, formalized through the
upper Wasserstein dimension.

\begin{definition}[Covering numbers and measure covering dimension]
Let $(X,d)$ be a metric space and $S\subseteq X$.
The $\varepsilon$--covering number of $S$ is
\[
\mathcal{N}_\varepsilon(S)
:= \min\left\{N : S \subseteq \bigcup_{i=1}^N B_i,\ \operatorname{diam}(B_i)\le \varepsilon \right\}.
\]
For a probability measure $\mu$ on $X$, the $(\varepsilon,\tau)$--covering number is
\[
\mathcal{N}_\varepsilon(\mu,\tau)
:= \inf\{\mathcal{N}_\varepsilon(S): \mu(S)\ge 1-\tau\},
\]
and the associated $(\varepsilon,\tau)$--dimension is
\[
d_\varepsilon(\mu,\tau)
:= \frac{\log \mathcal{N}_\varepsilon(\mu,\tau)}{-\log \varepsilon}.
\]
\end{definition}

\noindent\textbf{Remark.}
The quantity $d_\varepsilon(\mu,\tau)$ measures the effective geometric complexity
of the bulk of the distribution at scale $\varepsilon$, while allowing a
$\tau$--fraction of the probability mass to be ignored.

\begin{definition}[Upper Wasserstein dimension]
For a probability measure $\mu$ on $(X,d)$ and $p\ge 1$, the \emph{upper Wasserstein dimension} is defined as
\[
d_p^*(\mu)
:= 
\inf\left\{
s > 2p :
\limsup_{\varepsilon\to 0}
d_\varepsilon\!\left(\mu,\ \varepsilon^{sp/(s-2p)}\right)
\le s
\right\}.
\]
\end{definition}

\noindent\textbf{Remark.}
The tolerance parameter $\tau=\varepsilon^{sp/(s-2p)}$ allows a vanishing fraction
of the probability mass to be excluded as the scale $\varepsilon \to 0$, thereby
preventing pathological, high-complexity regions of negligible mass from
dominating the dimension estimate.
As shown in \citet{weed2019sharp}, the resulting quantity $d_p^*(\mu)$ precisely
characterizes the minimax convergence rates of empirical measures in Wasserstein
distance.

\paragraph{Wasserstein convergence rates.}
Let $\hat\mu_n$ denote the empirical distribution of $n$ i.i.d.\ samples from $\mu$.

\begin{theorem}[Wasserstein convergence governed by intrinsic dimension]
\label{thm:weedbach}
For any $p\in[1,\infty)$ and any $\varepsilon>0$, setting
$s = d_p^*(\mu)+\varepsilon$ yields
\[
\mathbb{E}\big[W_p(\mu,\hat\mu_n)\big]
\le C_{\varepsilon,p}\, n^{-1/s}.
\]
Since $\varepsilon$ may be chosen arbitrarily small, the convergence rate can be made arbitrarily close to $n^{-1/d_p^*(\mu)}$.
\end{theorem}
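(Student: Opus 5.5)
This is the Weed--Bach upper bound, and I will prove it with the standard dyadic multiscale argument. I assume $\mu$ has bounded support, say $\operatorname{diam}(\operatorname{supp}\mu)\le 1$ after rescaling; this is the setting in which $d_p^*$ is defined. Set $s=d_p^*(\mu)+\varepsilon$. By definition of the infimum there is $s'<s$ with $s'>2p$ and $\limsup_{\delta\to0} d_\delta(\mu,\delta^{s'p/(s'-2p)})\le s'$. Hence there is $\delta_0>0$ such that for all $\delta\le\delta_0$ we have a set $S_\delta$ with
\[
\mu(S_\delta)\ge 1-\delta^{sp/(s-2p)}
\quad\text{and}\quad
\mathcal N_\delta(S_\delta)\le \delta^{-s}.
\]
Passing from $s'$ to $s$ is harmless because $\delta^{s'p/(s'-2p)}\le\delta^{sp/(s-2p)}$, as the exponent decreases in $s$.

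\textbf{Step 1: a multiscale transport bound.} Fix $\delta_j=3^{-j}$ for $j=j_0,\dots,J$, build covers of $S_{\delta_j}$, and refine them into nested partitions $\mathcal Q_j$ with cells of diameter at most $\delta_j$. Add one "garbage" cell for the complement at each scale. The standard dyadic-transport lemma then gives, for any two measures,
\[
W_p^p(\mu,\nu)\le \delta_J^p+\sum_{j} \delta_{j-1}^p\sum_{Q\in\mathcal Q_j}|\mu(Q)-\nu(Q)|.
\]
The proof of the lemma is to move mass within cells at the finest level, then correct the discrepancies level by level. Mass sitting in the garbage cells costs at most $\operatorname{diam}^p\le1$ times its measure.

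\textbf{Step 2: take expectations with $\nu=\hat\mu_n$.} For each cell, $\mathbb E|\mu(Q)-\hat\mu_n(Q)|\le\sqrt{\mu(Q)/n}$. By Cauchy--Schwarz over at most $\delta_j^{-s}+1$ cells,
\[
\sum_{Q}\mathbb E|\mu(Q)-\hat\mu_n(Q)|\lesssim \sqrt{\delta_j^{-s}/n}.
\]
The garbage cell contributes at most $2\mu(S_{\delta_j}^c)\le 2\delta_j^{sp/(s-2p)}$. Summing,
\[
\mathbb E W_p^p\lesssim \delta_J^p+\sum_{j\le J}\Big(\delta_j^{p-s/2}n^{-1/2}+\delta_j^{p+sp/(s-2p)}\Big).
\]
Since $s>2p$, the first sum is geometric and dominated by its last term, $\delta_J^{p-s/2}n^{-1/2}$. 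The second sum is bounded by a constant times its coarsest term, which I need to handle by choosing $j_0$ appropriately.

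\textbf{Step 3: the main obstacle.} The delicate part is balancing the garbage mass against the covering term. The tail tolerance exponent $sp/(s-2p)$ was chosen so that the discarded mass at scale $\delta$ is comparable to the covering contribution $\delta^{p-s/2}n^{-1/2}$ exactly at the optimal scale $\delta\approx n^{-1/s}$. To get this, I will not sum garbage at every scale. Instead I use the set $S_{\delta_J}$ only at the finest scale, intersected with the coarser covers, so that one discarded mass $\delta_J^{sp/(s-2p)}$ appears once.

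Choosing $\delta_J\asymp n^{-1/s}$ (with $\delta_J\le\delta_0$ once $n$ is large) gives
\[
\mathbb E W_p^p\lesssim n^{-p/s}+n^{-p/(s-2p)}\lesssim n^{-p/s}.
\]
Jensen's inequality then yields $\mathbb E W_p\le(\mathbb E W_p^p)^{1/p}\le C n^{-1/s}$. For small $n$, the bound holds trivially with $W_p\le1$ by enlarging $C_{\varepsilon,p}$.

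If the intersection bookkeeping in Step 3 proves awkward, my fallback is to cite \citet{weed2019sharp} directly, since the statement is exactly their Proposition 5.
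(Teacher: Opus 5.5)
The paper gives no proof of this statement; it is cited from \citet{weed2019sharp}, so your fallback is exactly what the paper does, and your added bounded-support assumption is needed. Your self-contained argument, however, fails at Step~3, the only non-routine step. Because the tolerance sets $S_{\delta_j}$ differ from scale to scale, you cannot have both a single discarded mass of order $\delta_J^{sp/(s-2p)}$ and covering numbers $\lesssim\delta_j^{-s}$ at every coarser scale. Each way of reading your construction breaks:
\begin{itemize}
\item If the good set is $S_{\delta_J}$, the only bound you have is $\mathcal N_{\delta_j}(S_{\delta_J})\le\mathcal N_{\delta_J}(S_{\delta_J})\le\delta_J^{-s}$. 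The coarse levels then give a term of order $\sqrt{\delta_J^{-s}/n}\asymp 1$.
\item If you restrict the coarser covers to $S_{\delta_J}$, the set $S_{\delta_J}\setminus S_{\delta_j}$ is left uncovered, which is garbage at every scale again.
\item If you intersect all the $S_{\delta_j}$, the discarded mass is $\sum_j\delta_j^{sp/(s-2p)}$, which is governed by the \emph{coarsest} scale, not the finest.
\end{itemize}
Your explanation of the exponent is also off. At $\delta=n^{-1/s}$ the discarded mass is $n^{-p/(s-2p)}\ll n^{-p/s}$, so nothing is being balanced there.

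Step~1 has a related defect. Garbage cells of diameter up to $1$ are incompatible with the weights $\delta_{j-1}^p$ in the dyadic lemma, so the terms $\delta_j^{p+sp/(s-2p)}$ in Step~2 are unjustified. Also, if the coarsest level $j_0$ grows with $n$, the first mismatch term must carry weight $\operatorname{diam}(X)^p\le 1$, not $\delta_{j_0-1}^p$.

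The missing idea is an intermediate cutoff scale. Take $\delta_J\asymp n^{-1/s}$ and $\delta_{j_0}\asymp n^{-(s-2p)/s^2}$, so that $\delta_J\le\delta_{j_0}\le\delta_0$ for large $n$. Use the single set $S=\bigcap_{j_0\le j\le J}S_{\delta_j}$, whose complement has mass $\lesssim\delta_{j_0}^{sp/(s-2p)}=n^{-p/s}$. Build one nested partition of $S$, with level-$j$ cells of diameter $O(\delta_j)$ and at most $\mathcal N_{\delta_j}(S)$ of them. Add $S^c$ as a single extra cell; it costs at most $\max\{\mu(S^c),\hat\mu_n(S^c)\}$, whose expectation is at most $2\mu(S^c)$. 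The two ranges of levels are then handled separately:
\begin{itemize}
\item Levels $j\ge j_0$ use $\mathcal N_{\delta_j}(S)\le\delta_j^{-s}$ and contribute $\lesssim\delta_J^{p-s/2}n^{-1/2}=n^{-p/s}$.
\item Levels $j<j_0$ use only the crude bound $\mathcal N_{\delta_j}(S)\le\mathcal N_{\delta_{j_0}}(S)\le\delta_{j_0}^{-s}$ and contribute $\lesssim\sqrt{\delta_{j_0}^{-s}/n}=n^{-p/s}$.
\end{itemize}
This is where the exponent $sp/(s-2p)$ comes from. It makes the unweighted coarse covering term $\delta^{-s/2}n^{-1/2}$ and the discarded mass $\delta^{sp/(s-2p)}$ both equal $n^{-p/s}$ at $\delta=n^{-(s-2p)/s^2}$. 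With this change, your Step~2 estimates and the final Jensen step go through.
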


\noindent\textbf{Remark.}
Lower intrinsic dimension implies faster Wasserstein convergence of empirical
measures. In the context of representation learning, estimating an intrinsic
dimension proxy from embeddings therefore predicts how efficiently finite samples recover the population representation geometry.

\subsection{Risk and Bayes Predictor}

\begin{definition}[Population and empirical risk]
For a fixed predictor $F:\mathcal{X}\to\mathbb{R}^C$ and loss function $\ell$, the
\emph{population risk} is
\[
R(F) := \mathbb{E}_{(x,y)\sim P_{X,Y}}[\ell(F(x),y)],
\]
and the \emph{empirical risk} on $n$ i.i.d.\ samples $\{(x_i,y_i)\}_{i=1}^n$ is
\[
\hat R_n(F) := \frac{1}{n}\sum_{i=1}^n \ell(F(x_i),y_i).
\]
The quantity of interest in this work is the \emph{generalization gap}
$R(F)-\hat R_n(F)$ for a fixed, trained model $F$.
\end{definition}

\noindent\textbf{Remark.}
In experiments, the empirical risk is computed on the training set, while the
population risk is approximated using an independent test set.

\begin{definition}[Bayes predictor]
\label{def:bayes}
Given a population distribution $P_{X,Y}$ over inputs and outputs, the \emph{Bayes predictor} is defined as the conditional risk minimizer
\[
F^*(x) := \arg\min_{f}\;\mathbb{E}\big[\ell(f(X),Y)\mid X=x\big].
\]
It represents the population-optimal prediction achievable given access to the full data.
\end{definition}

In the context of a fixed network and an intermediate layer $k$, we denote by $F_k^*$ the Bayes predictor associated with the embedding $Z_k = F_{\le k}(X)$, corresponding to the population-optimal tail mapping from
$\mathcal{Z}_k$ to the output space.

\noindent\textbf{Remark.}
The Bayes predictor is a theoretical reference and is not assumed to be
computable or learned. It is introduced to separate approximation error due to information loss in the representation from estimation error due to finite samples. By replacing the discrete label $Y$ with the continuous target $F_k^*(Z_k)$, the loss becomes smooth with respect to the embedding distribution, enabling a distributional analysis. The resulting approximation error is explicitly accounted for in the generalization bounds.

\subsection{Standing Assumptions}
\label{ssec:assumptions}

We now state the regularity assumptions required for the analysis, together with their roles in the proofs.

\begin{assumption}[Measurability of embeddings]
\label{assump:measurable}
For each layer $k$, the embedding map $F_{\le k}:\mathcal{X}\to\mathcal{Z}_k$ is measurable, ensuring that the pushforward distribution $\tilde P_k^Z$ is well defined.
\end{assumption}

\begin{assumption}[Bounded support]
\label{assump:bounded}
Each embedding distribution $\tilde{P}_k^Z$ has bounded $\ell_1$-diameter:
\[
D_k := \sup_{z,z'\in \operatorname{supp}(\tilde{P}_k^Z)} \|z-z'\|_1 < \infty.
\]
The bounded diameter $D_k$ is used in Proposition \ref{prop:concentration} (Appendix \ref{appendix: Concentration of wasserstein}) to control the effect of a single-sample replacement when applying McDiarmid's inequality to the Wasserstein term $W_1(\tilde P_k^Z,\hat P_{k,n}^Z)$.
\end{assumption}

\begin{assumption}[Local Lipschitz continuity of tail and Bayes maps]
\label{assump:lipschitz_maps}
For each layer $k$, there exists an open neighborhood
$U_k\supseteq\operatorname{supp}(\tilde P_k^Z)$ on which both the network tail map
$F_k$ and the Bayes predictor $F_k^*$ are Lipschitz, with constants
\begin{align}
L_k(F) := \sup_{z \in U_k} \|\nabla F_k(z)\|_2, \notag\\
L_k(F^*) := \sup_{z \in U_k} \|\nabla F_k^*(z)\|_2. \notag
\end{align}
Here $\|\cdot\|_2$ denotes the induced 2-norm (spectral norm) of the Jacobian.
These constants control how perturbations in the embedding space propagate through the corresponding maps.
\end{assumption}

\begin{assumption}[Smooth loss with bounded gradient]
\label{assump:loss}
The loss $\ell:\mathbb{R}^C\times\mathbb{R}^C\to\mathbb{R}$ is continuously
differentiable in both arguments.
There exist constants $M_F,M_{F^*}<\infty$ such that
\[
\|\nabla_u \ell(u,v)\|_\infty \le M_F,
\qquad
\|\nabla_v \ell(u,v)\|_\infty \le M_{F^*}.
\]
This assumption is satisfied by commonly used smooth losses such as squared error and cross-entropy.
Non-smooth losses (e.g., hinge loss) can be accommodated via standard smoothing arguments.
These gradient bounds ensure that perturbations in the embeddings propagate in a controlled manner through the loss, making the subsequent analysis of embedding-induced errors feasible.
\end{assumption}

\begin{assumption}[Bounded loss of the Bayes predictor]
\label{assump:bayes_bounded}
Let $F^*$ denote the Bayes predictor defined in Definition~\ref{def:bayes}.
We assume that there exists a constant $M>0$ such that
\[
\ell\big(F^*(x),y\big) \le M
\quad \text{for all } (x,y) \sim P_{X,Y}.
\]
\end{assumption}

\section{Main Theoretical Results}
\label{sec:main_results}

\subsection{Dimension-dependent post-training generalization bound}
\label{ssec:main_theorem}

We first establish a bound on the \emph{generalization gap}
$R(F)-\hat R_n(F)$ for a fixed, trained predictor $F$.
The bound explicitly reveals how the intrinsic dimension of intermediate
embeddings and the sensitivity of downstream mappings jointly control
generalization.

\begin{theorem}[Dimension-dependent post-training generalization bound]
\label{thm:main_observed_mcd}
Assume Assumptions~\ref{assump:measurable}--\ref{assump:bayes_bounded}, and fix a confidence
level $\delta\in(0,1)$.
For each layer $k$, let $d_k := d_1^*(\tilde P_k^Z)$ denote the upper Wasserstein
dimension of the embedding distribution.
Then for any $\epsilon>0$, there exists a constant $C_k>0$ such that, for all
sufficiently large $n$,
\[
\mathbb{E}\!\left[\mathcal W_1(\tilde P_k^Z,\hat{\tilde P}_{k,n}^Z)\right]
\le C_k\, n^{-1/(d_k+\epsilon)}.
\]

Under these conditions, for any fixed predictor $F$, with probability at least
$1-\delta$,
\begin{align}
& R(F) \le \hat R_n(F) \notag\\
& + \min_{0\le k\le L} \Big\{ \bar L_k \Big(
C_k\, n^{-1/(d_k+\epsilon)} + D_k \sqrt{\tfrac{1}{2n}\log\tfrac{2(L+1)}{\delta}}
\Big) \ \notag\\
& + M_{F^*}\Big(
2\,\mathbb E\|Y-F_k^*(Z_k)\|_1 + \sqrt{\tfrac{2}{n}\log\tfrac{2(L+1)}{\delta}}
\Big)
\Big\},
\label{eq:main_bound_mcd}
\end{align}
where $D_k$ is the $\ell_1$-diameter of the embedding support and
\[
\bar L_k := L_k(F_k)\,M_F + L_k(F_k^*)\,M_{F^*}.
\]

Here $L_k(F_k)$ denotes the Lipschitz constant of the network tail map from layer $k$ to the output, and $L_k(F_k^*)$ denotes the Lipschitz constant of the Bayes predictor from layer $k$ to the output.  
$M_F$ and $M_{F^*}$ are the gradient bounds of the loss with respect to $F$ and $F^*$, as defined in Assumption~\ref{assump:loss}.  
Together, these constants quantify how perturbations in the embeddings propagate through the network and the loss, enabling the analysis of embedding-induced generalization error.
\end{theorem}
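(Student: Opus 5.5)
The first claim, the expectation bound on $\mathcal W_1$, follows from Theorem~\ref{thm:weedbach} with $p=1$ and $\mu=\tilde P_k^Z$. Assumption~\ref{assump:bounded} gives bounded support, so finite moments hold and $d_1^*$ is finite. We take $C_k:=C_{\epsilon,1}$.

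For the risk bound, fix a layer $k$ and introduce the Bayes surrogate loss $g_k(z):=\ell(F_k(z),F_k^*(z))$ on $\mathcal Z_k$. We write
\[
R(F)-\hat R_n(F) = \big(R(F)-\mathbb E\, g_k(Z_k)\big) + \big(\mathbb E\, g_k(Z_k)-\tfrac1n\textstyle\sum_i g_k(z_i)\big) + \big(\tfrac1n\textstyle\sum_i g_k(z_i)-\hat R_n(F)\big).
\]
The first and third terms are approximation terms. Using Assumption~\ref{assump:loss} in the second argument, $|\ell(u,y)-\ell(u,F_k^*(z))|\le M_{F^*}\|y-F_k^*(z)\|_1$ by the mean value theorem with the $\ell_\infty/\ell_1$ duality. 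Treating labels as vectors in $\mathbb R^C$ (one-hot), the first term is therefore at most $M_{F^*}\mathbb E\|Y-F_k^*(Z_k)\|_1$. The third term is at most $M_{F^*}\frac1n\sum_i\|y_i-F_k^*(z_i)\|_1$. That empirical average is concentrated around its mean by Hoeffding's inequality; the summands are bounded because of Assumption~\ref{assump:bayes_bounded} and bounded labels. This produces $M_{F^*}\big(2\mathbb E\|Y-F_k^*(Z_k)\|_1+\sqrt{\tfrac2n\log\tfrac{2(L+1)}\delta}\big)$. I would normalize the range so the constant in the square root comes out as stated; pinning down that range is the step where the constants need care.

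The middle term is the estimation term, handled by Kantorovich--Rubinstein duality. On $U_k$, the chain rule with Assumptions~\ref{assump:lipschitz_maps} and \ref{assump:loss} gives $\|\nabla g_k\|\le L_k(F_k)M_F+L_k(F_k^*)M_{F^*}=\bar L_k$, with norms matched via Hölder between $\ell_\infty$ and $\ell_1$. Hence $g_k$ is $\bar L_k$-Lipschitz on the support, and
\[
\big|\mathbb E g_k-\tfrac1n\textstyle\sum g_k(z_i)\big|\le \bar L_k\,\mathcal W_1(\tilde P_k^Z,\hat{\tilde P}_{k,n}^Z).
\]
A technical point here is that duality only needs Lipschitzness on the support. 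If the support is not convex, one extends $g_k$ by a McShane extension, or notes that segments between support points lie in $U_k$.

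To get high probability, apply McDiarmid's inequality (Proposition~\ref{prop:concentration}) to $\mathcal W_1(\tilde P_k^Z,\hat{\tilde P}_{k,n}^Z)$. Replacing one sample moves the empirical measure by mass $1/n$ over distance at most $D_k$, so the bounded difference is $D_k/n$. This yields $\mathcal W_1\le \mathbb E\mathcal W_1 + D_k\sqrt{\frac1{2n}\log\frac1{\delta'}}$, and $\mathbb E\mathcal W_1$ is bounded by the first part. Finally, allot $\delta'=\delta/(2(L+1))$ to each of the two concentration events at each of the $L+1$ layers and take a union bound. On the intersection of all these events the bound holds simultaneously for every $k$, so we may take the minimum over $k$.

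I expect the main obstacle to be making the surrogate step rigorous, namely passing from the discrete $Y$ to $F_k^*(Z_k)$ with only gradient bounds. One must ensure $F_k^*(z)$ and $y$ lie in a region where $\ell$ is differentiable, and must control the range in Hoeffding's inequality. I would first try to use one-hot labels and a bounded $F_k^*$ to fix that range.
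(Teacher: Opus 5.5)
Your proposal is essentially the paper's proof. You use the same oracle-loss decomposition (Proposition~\ref{prop:decomp_app}), the same mean-value bounds for the two label-replacement terms (Lemmas~\ref{lem:gap_A} and~\ref{lem:gap_C}), and Kantorovich--Rubinstein for the middle term. You also use McDiarmid with bounded difference $D_k/n$, Hoeffding for the noise average (Proposition~\ref{prop:concentration}), and a union bound over the $2(L+1)$ events, which you state more explicitly than the paper does.

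On the Hoeffding range, the paper simply postulates $\|y_i-F_k^*(z_{k,i})\|_1\in[0,2]$, i.e.\ one-hot labels and a probability-vector Bayes predictor. A range of length $2$ gives the tail $\exp(-nt^2/2)$, hence exactly $\sqrt{\tfrac2n\log\tfrac{2(L+1)}{\delta}}$. Assumption~\ref{assump:bayes_bounded} does not imply this range, so fixing it through one-hot labels, as you suggest, is the right move.

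The step that does not go through as written is $\|\nabla g_k\|_\infty\le\bar L_k$ ``with norms matched via H\"older''. This fails in your proposal and equally in the paper's Lemma~\ref{lem:grad_bound_app}. Because $\mathcal W_1$ uses the $\ell_1$ metric on $\mathcal Z_k$, you must bound $\|J^\top a\|_\infty$, where $J$ is the Jacobian of $F_k$ (or $F_k^*$) and $\|a\|_\infty\le M_F$. The constant that does this is the $\ell_1\to\ell_1$ operator norm $\max_j\sum_i|J_{ij}|$. Assumption~\ref{assump:lipschitz_maps}, however, defines $L_k(F_k)$ and $L_k(F_k^*)$ through the spectral norm. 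In general only $\|J\|_{1\to1}\le\sqrt C\,\|J\|_2$ holds, and this is sharp. Take $\mathcal Z_k=\mathbb R$, $F_k(z)=z\mathbf 1/\sqrt C$ (spectral norm $1$), $F_k^*$ constant, and a loss with $\nabla_u\ell\equiv\mathbf 1$ (so $M_F=1$). Then $g_k$ has slope $\sqrt C$ while $\bar L_k=1$.

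So the argument only yields the bound with $\sqrt C\,\bar L_k$. The factor cannot be hidden in $C_k$, because it also multiplies the explicit McDiarmid term. The repair is to take $L_k(F_k)$ and $L_k(F_k^*)$ to be Lipschitz constants from $(\mathcal Z_k,\ell_1)$ to $(\mathbb R^C,\ell_1)$, or else to carry the $\sqrt C$.

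That repair also settles your non-convexity worry, which the McShane extension does not. McShane presupposes that $g_k$ is already $\bar L_k$-Lipschitz on the support, which is exactly what a gradient bound on a possibly non-convex $U_k$ fails to give. ``Segments lie in $U_k$'' is an extra hypothesis. Instead of differentiating $g_k$ in $z$, apply the mean value theorem only to $\ell$ on the convex set $\mathbb R^C\times\mathbb R^C$. This gives $|g_k(z)-g_k(z')|\le M_F\|F_k(z)-F_k(z')\|_1+M_{F^*}\|F_k^*(z)-F_k^*(z')\|_1$. Then use the Lipschitz property of $F_k$ and $F_k^*$ on $U_k$, in the $\ell_1\to\ell_1$ form. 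The coupling argument of Lemma~\ref{lem:W1_bound} needs this only for pairs of support points, and the empirical atoms lie in the support almost surely.
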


\noindent\textbf{Remarks.}
\begin{itemize}[leftmargin=*]
    \item \textbf{Dimension-controlled statistical error.}
    The dominant term $n^{-1/(d_k+\epsilon)}$ arises from the convergence rate of
    the empirical embedding distribution to its population counterpart in
    Wasserstein distance.
    Lower intrinsic dimension implies faster distributional convergence and thus
    smaller generalization gap for a fixed sample size.

    \item \textbf{Sensitivity amplification.}
    The factor $\bar L_k$ quantifies how discrepancies in the embedding
    distribution are amplified by downstream mappings and the loss.
    Even low-dimensional representations may generalize poorly if the predictor
    is highly sensitive to small embedding perturbations, highlighting the joint
    role of dimension and Lipschitz stability.

    \item \textbf{Bayes approximation (irreducible) error.}
    The term involving $\mathbb E\|Y-F_k^*(Z_k)\|_1$ captures the irreducible approximation error incurred when replacing discrete labels with the Bayes predictor. This component reflects intrinsic label noise or information loss in the representation and cannot be reduced by increasing sample size.

    \item \textbf{Layer-wise tradeoff.}
    Different layers induce different balances between intrinsic dimension and
    sensitivity.
    Early layers may exhibit higher dimension but lower sensitivity, while later
    layers are typically more compressed but potentially more sensitive.
    Minimizing over $k$ selects the representation that provides the tightest
    control of the generalization gap.
\end{itemize}

\subsection{Final-layer simplification}
\label{ssec:final_layer}

At the final layer, the representation coincides with the model output,
$Z_L = F(X)$.
In this case, the downstream mapping from embeddings to predictions is the
identity map, and hence introduces no additional architectural amplification.

\begin{corollary}[Final-layer bound]
\label{cor:final_layer_identity}
For the final embedding $Z_L$, the tail mapping is the identity and therefore
$L_L(F)=1$.
With probability at least $1-\delta$,
\begin{align}
R(F)
\;\le\;
\hat R_n(F)
&+ \big( M_F + L_L(F^*)\,M_{F^*} \big) \notag\\
& * \Big(
C_L\, n^{-1/(d_L+\epsilon)}
+ D_L \sqrt{\tfrac{1}{2n}\log\tfrac{2(L+1)}{\delta}}
\Big) \notag\\
&+ M_{F^*}\Big(
2\,\mathbb E\|Y-F_L^*(Z_L)\|_1 \notag\\
&+ \sqrt{\tfrac{2}{n}\log\tfrac{2(L+1)}{\delta}}
\Big).
\label{eq:corollary_final_layer_identity}
\end{align}
\end{corollary}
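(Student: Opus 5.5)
The corollary follows by specializing Theorem~\ref{thm:main_observed_mcd} to the single index $k=L$. Since the minimum over $0\le k\le L$ in \eqref{eq:main_bound_mcd} is at most the value of the bracketed expression at any particular $k$, the bound with $k=L$ inserted holds on the same high-probability event. That event has probability at least $1-\delta$ and was built with the union-bound factor $2(L+1)$. We therefore keep the logarithmic terms $\log\tfrac{2(L+1)}{\delta}$ unchanged rather than re-deriving a sharper single-layer constant. No new probabilistic argument is needed: the Wasserstein rate $C_L n^{-1/(d_L+\epsilon)}$ from Theorem~\ref{thm:weedbach}, the McDiarmid deviation term with diameter $D_L$, and the Bayes-approximation term are all inherited verbatim with $k=L$.

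The only substantive step is evaluating $\bar L_L = L_L(F_L)\,M_F + L_L(F_L^*)\,M_{F^*}$. By Definition~\ref{def:decomp} at $k=L$, the encoder $F_{\le L}$ is $F$ itself and $Z_L=F(X)$. The tail map $F_L:\mathcal Z_L\to\mathbb R^C$ must satisfy $F_L(F(x))=F(x)$, so we take $F_L=\mathrm{id}$ on $\mathcal Z_L=\mathbb R^C$, or on any open neighborhood $U_L$ of the support. Its Jacobian is the identity matrix everywhere, whose spectral norm is $1$. Hence, under Assumption~\ref{assump:lipschitz_maps}, $L_L(F)=\sup_{z\in U_L}\|I\|_2=1$. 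Substituting gives $\bar L_L = M_F + L_L(F^*)M_{F^*}$, which is exactly the prefactor in \eqref{eq:corollary_final_layer_identity}. The remaining terms match \eqref{eq:main_bound_mcd} at $k=L$ after the substitutions $d_L=d_1^*(\tilde P_L^Z)$, $D_L$, and $F_L^*(Z_L)$.

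The main point to check, rather than a real obstacle, is that the assumptions of Theorem~\ref{thm:main_observed_mcd} hold at $k=L$ with the identity tail:
\begin{itemize}
\item Measurability is immediate.
\item Local Lipschitz continuity of the identity is trivial.
\item Bounded $\ell_1$-diameter of $\tilde P_L^Z$, i.e.\ of the output distribution, is part of Assumption~\ref{assump:bounded}, since that assumption is required for every layer including $k=L$.
\item The Bayes map $F_L^*$ on the output space keeps whatever Lipschitz constant $L_L(F^*)$ the assumption supplies; the corollary does not simplify it further.
\end{itemize}
Once these are confirmed, the inequality follows directly from the theorem.
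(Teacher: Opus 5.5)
Your proposal is correct and matches the paper's own (implicit) argument. You bound the minimum in the main theorem by its $k=L$ term on the same probability-$(1-\delta)$ event, and you use that the identity tail has unit Jacobian norm, so $\bar L_L = M_F + L_L(F^*)M_{F^*}$; since $\|I\|=1$ in every induced operator norm, this step holds whichever norm the gradient lemma actually needs.
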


\noindent\textbf{Remark.}
At the final layer, architectural sensitivity disappears from the bound,
leaving a dependence only on:
(i) the intrinsic dimension $d_L$,
(ii) the embedding diameter $D_L$,
(iii) loss-dependent smoothness constants $(M_F,M_{F^*})$, and
(iv) the Bayes smoothness and irreducible error terms.
This simplification explains why final-layer intrinsic dimension often serves as
a strong empirical predictor of generalization: performance is governed primarily
by representation geometry and data-dependent smoothness rather than network
architecture.

\noindent\textbf{Summary.}  
The generalization error is determined by two main forces: the intrinsic dimension of embeddings (statistical efficiency) and Lipschitz sensitivity (stability to perturbations). Intermediate layers reflect both effects, requiring joint consideration of dimension and sensitivity. The final layer provides a simplified diagnostic where only dimension and distribution-dependent smoothness remain, clarifying why final-layer dimension has strong predictive power for generalization. The complete proof of Theorem \ref{thm:main_observed_mcd} is provided in Appendix \ref{Supplement: proof}.

\section{Experiments and Results}
\label{sec:exp_validation}

We empirically validate the theoretical predictions derived in
Section~\ref{sec:main_results}.
Each experiment is designed to isolate and test a specific component of the
generalization bound: (i) the dimension–controlled Wasserstein
convergence rate (Section \ref{sec:synthetic_wass}), (ii) the final-layer simplification that removes architectural sensitivity(Section \ref{sec:cross_arch} and \ref{sec:large_model}), and (iii) the joint role of embedding dimension and downstream Lipschitz sensitivity at intermediate layers (Section \ref{sec:causal_width}).

\subsection{Validation of Wasserstein Convergence Scaling}
\label{sec:synthetic_wass}

Theorem~\ref{thm:weedbach} predicts that the convergence rate of empirical to
population distributions in Wasserstein distance is governed by the intrinsic
dimension of the underlying distribution.
Before relating this behavior to generalization, we first ask whether this
scaling law holds for the complex, data-dependent embeddings produced by neural
networks.

\begin{figure}[htbp] 
  \centering
  \includegraphics[width=0.95\linewidth]{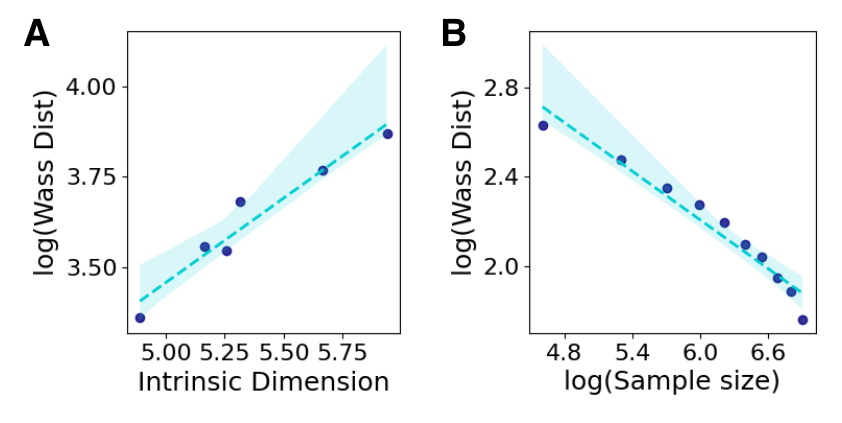}
  \caption{\textbf{Scaling of Wasserstein Convergence in Neural Network Embeddings.} 
  \textbf{(A)} With fixed sample size, log(Wasserstein distance) increases approximately linearly with embedding dimension.
  \textbf{(B)} With fixed embedding dimension, log(Wasserstein distance) decreases approximately linearly with log(sample size).}
  \label{fig:wass_scaling}
\end{figure}

We train a five-layer MLP autoencoder on MNIST and analyze how the
Wasserstein distance between empirical and population embedding distributions depends on both intrinsic dimension and sample size.
We examine two complementary perspectives.
First, we fix the sample size $n$ and study how the Wasserstein distance varies with the intrinsic dimension of the embedding.
Second, we fix the intrinsic dimension and evaluate how the Wasserstein distance scales with $n$ according to the predicted $n^{-1/(d+\epsilon)}$ rate. 
Sample sizes are varied from $n=100$ to $n=1500$.
For each configuration, intrinsic dimension is estimated using the MLE estimator of \citet{levina2004maximum}.

The results exhibit two consistent patterns.
For fixed $n$, the Wasserstein distance increases approximately exponentially with intrinsic dimension, as predicted by the dimension-dependent rate (Figure~\ref{fig:wass_scaling}A).
Conversely, for fixed $d$, the Wasserstein distance decreases approximately as a power law in $n$, closely matching the theoretical $n^{-1/(d+\epsilon)}$ scaling (Figure~\ref{fig:wass_scaling}B).
These findings confirm that intrinsic dimension accurately governs Wasserstein convergence even for learned neural network embeddings.
Additional experimental details are provided in Appendix~\ref{detail: synthetic_wass}.

\subsection{Analysis of Dimension, Wasserstein Distance and Generalization in Convolutional Networks}
\label{sec:cross_arch}

Corollary~\ref{cor:final_layer_identity} predicts that when analyzing the final
layer of a network, architectural sensitivity vanishes and the generalization
gap is primarily governed by the intrinsic dimension of the output embedding and
its Wasserstein convergence behavior.
We empirically test this prediction across architectures and datasets.

\begin{figure}[htbp]
  \centering
  \includegraphics[width=0.99\linewidth]{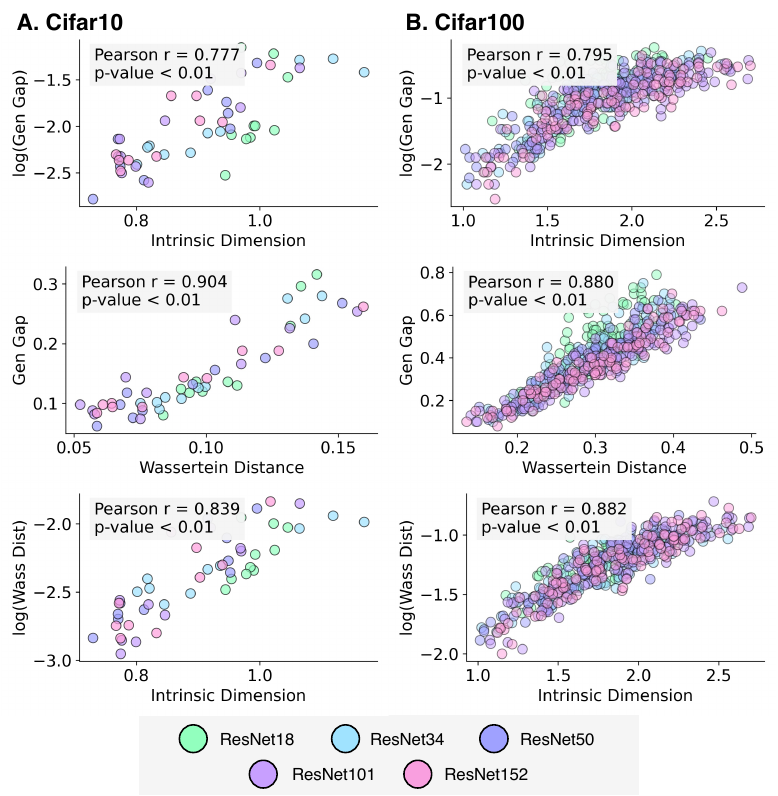}
  \caption{\textbf{Relationship Between Final-Layer Embedding Dimension, Wasserstein Distance and Generalization Error.}  
  We evaluate CIFAR-10 \textbf{(A)} and CIFAR-100 \textbf{(B)} and observe a significant correlation between final-layer embedding dimension, Wasserstein distance and generalization error.}
  \label{fig:sameDataDiffModel}
\end{figure}

We evaluate ResNet-18, 34, 50, 101, and 152 on CIFAR-10 and CIFAR-100.
For each trained model, we extract the final-layer embeddings and estimate their
intrinsic dimension.
We also compute the empirical Wasserstein distance between validation and test embedding distributions, which approximates the distributional discrepancy appearing in the final-layer bound.

To obtain finer-grained statistical resolution, we perform the analysis at the class level. Each ResNet model thus yields 10 data points on CIFAR-10 and 100 data points on CIFAR-100, allowing us to assess the relationship between embedding geometry and generalization gap across a broad range of conditions.

Figure~\ref{fig:sameDataDiffModel} shows a clear positive relationship between
final-layer intrinsic dimension, empirical Wasserstein distance, and the generalization gap.
Because architectural sensitivity is eliminated by construction at the final
layer, this result provides direct empirical support for the corollary and
demonstrates that representation geometry alone is sufficient to predict
generalization behavior across architectures.
Additional analyses and robustness checks are provided in
Appendix~\ref{detail: cross_arch} and Appendix \ref{supple: algorithm and hyperparameter}.

\subsection{Analysis of Dimension, Wasserstein Distance, and Generalization in Large Models}
\label{sec:large_model}




We further evaluate whether Corollary~\ref{cor:final_layer_identity} remains valid in the regime of large pretrained models and large-scale datasets. We perform the same analysis on pretrained vision models evaluated on ImageNet-1K and pretrained language models evaluated on MNLI \citep{williams2018broad}. Due to the lack of access to training data and training losses for such models, we estimate the Wasserstein distance using two disjoint subsets sampled from the dataset and characterize generalization performance using test accuracy. All architectural details are deferred to the Appendix \ref{appendix: largemodel}.

\begin{figure}[htbp]
  \centering
  \includegraphics[width=0.99\linewidth]{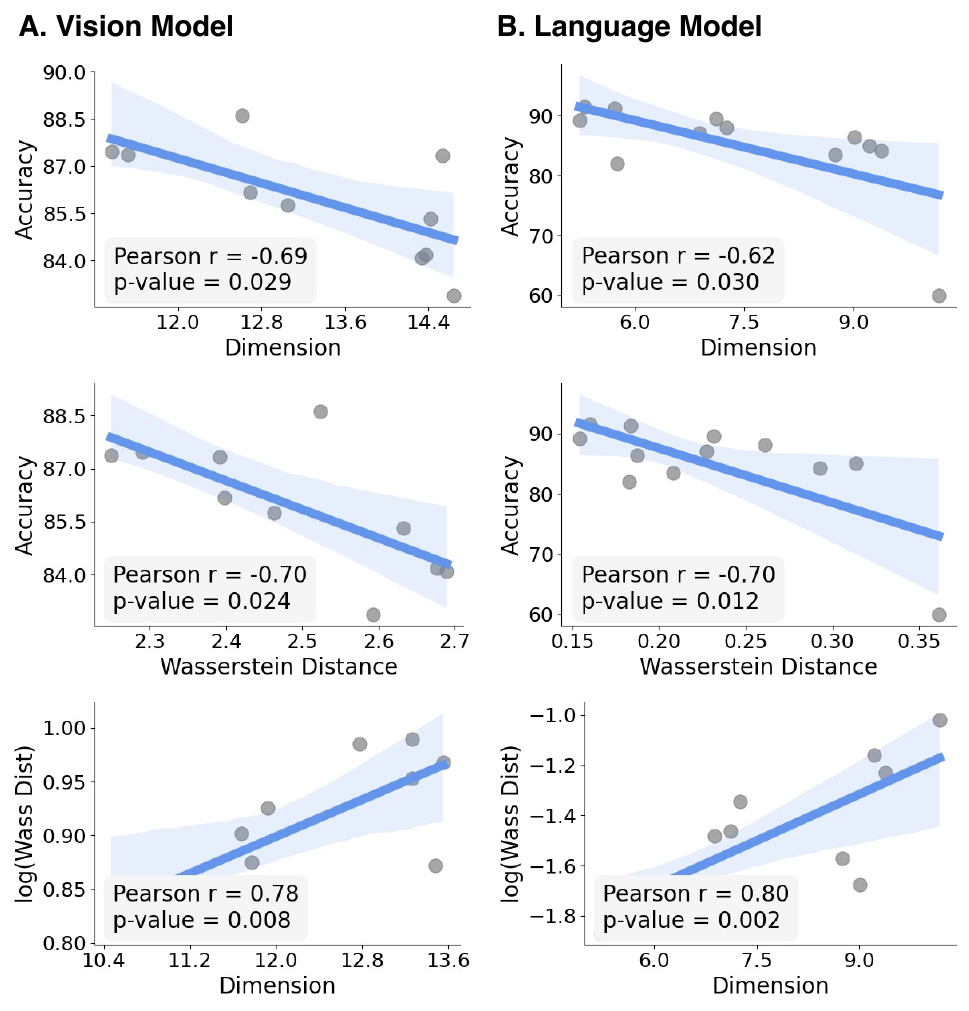}
    \caption{\textbf{Relationship between embedding geometry and generalization performance in large models.}
    (\textbf{A}) Vision models. (\textbf{B}) Language models. Across both modalities, embedding dimension and Wasserstein distance are strongly correlated with generalization performance, consistent with the theoretical predictions of our generalization bound.}
  \label{fig:LLMResults}
\end{figure}

As shown in Figure~\ref{fig:LLMResults}, we observe the same qualitative behavior as in previous section. Final-layer intrinsic dimension, Wasserstein distance and generalization performance remain strongly correlated for both vision and language models: lower-dimensional embeddings are associated with smaller Wasserstein distances and better test performance, and intrinsic dimension and Wasserstein distance are themselves tightly coupled.

These experiments demonstrate that the theoretical predictions remain valid for highly overparameterized models trained on massive datasets, showing that the connection between embedding geometry and generalization is not vacuous even as model size grows.

\subsection{Interventions on Network Width}
\label{sec:causal_width}

Theorem~\ref{thm:main_observed_mcd} predicts that at intermediate layers, generalization depends jointly on embedding intrinsic dimension and the Lipschitz sensitivity of the downstream mapping. To empirically test this interaction, we perform controlled architectural interventions that selectively modify one factor while tracking the other.

Directly computing the Lipschitz constant of deep networks is intractable in
general. We therefore consider fully connected MLPs with ReLU activations, for which the product of spectral norms of the weight matrices provides a computable upper bound on the network's Lipschitz constant \citep{bartlett2017spectrally}. We use this quantity as a sensitivity proxy and vary the width of a single intermediate layer to study its effect on both embedding geometry and sensitivity.

\begin{figure}[htbp] 
  \centering
  \includegraphics[width=0.85\linewidth]{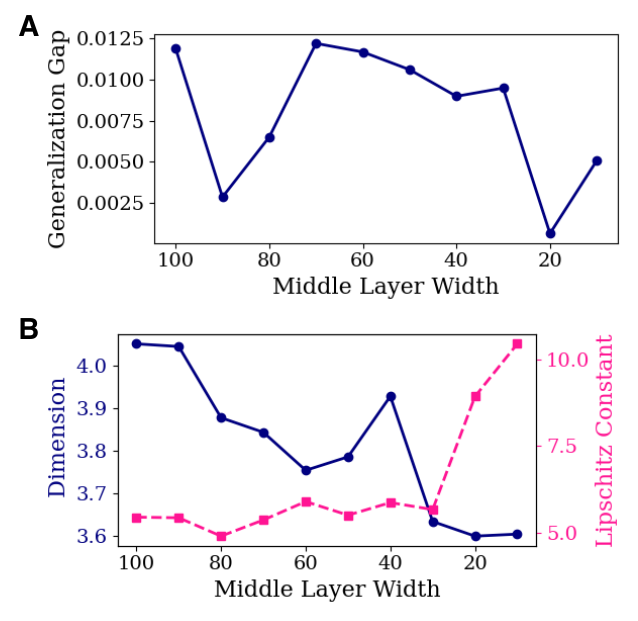}
    \caption{\textbf{Effect of Network Width on Embedding Dimension and Generalization.} 
    \textbf{(A)} Reducing the width of the third layer does not lead to a consistent decrease in generalization error. 
    \textbf{(B)} As the layer width decreases, the embedding dimension gradually decreases, but the network's Lipschitz constant increases, offsetting potential gains.}
  \label{fig:dimLipsRelation}
\end{figure}

Specifically, we train a six-layer MLP on CIFAR-10 and vary the width of the third
layer from 100 to 10. Figure~\ref{fig:dimLipsRelation} shows that while narrowing the layer consistently reduces the intrinsic dimension of the embedding, it also increases the network's Lipschitz constant, particularly at small widths. As a result, the generalization gap does not improve monotonically.

These results provide causal evidence for the tradeoff predicted by our theory: generalization is jointly governed by representation geometry and the sensitivity of the downstream mapping. While narrowing the network enforces lower-dimensional embeddings, it can simultaneously increase the Lipschitz constant of the learned mapping, thereby offsetting potential gains in generalization. This explains why architectural compression alone does not guarantee improved generalization and highlights the necessity of considering both embedding dimensionality and network stability in overparameterized models. Additional details are provided in Appendix~\ref{detail: causal_width}.

\section{Discussion and Conclusion}
\label{sec:discussion}

Understanding why deep networks generalize despite massive overparameterization remains a central challenge.  
We advance a \emph{representation-centric view}, showing that generalization error can be related to two measurable properties: the intrinsic dimension $d_k$ of embeddings and a sensitivity term $\bar{L}_k$ that quantifies how embedding perturbations propagate through the network.  
These quantities integrate model structure and data distribution, offering post-hoc diagnostics beyond classical capacity-based bounds. 

In Appendix~\ref{appendix: layerwise}, we analyze layer-wise embeddings in ResNet-154, computing correlations among intrinsic dimension, Wasserstein distance, and generalization performance. We find that intrinsic dimension and Wasserstein distance are already strongly correlated in early layers, while the correlations between dimension and generalization, as well as between Wasserstein distance and generalization, progressively increase with network depth. This confirms that embedding geometry plays an increasingly important role in explaining generalization in deeper layers.

\paragraph{Limitations.}  
Our bound contains constants that may be loose, yet experiments show that variations in embedding dimension effectively track generalization error, confirming practical relevance. Assumptions such as Lipschitz continuity of the Bayes predictor are necessary to relate a specific layer's embedding to the output; relaxing these is an important future direction. Estimating the Lipschitz constants remains challenging in practice, and developing reliable and scalable methods for bounding or estimating Lipschitz constants in deep networks is an important open problem.

\paragraph{Conclusion.}  
Focusing on embedding geometry rather than network parameters, we identify intrinsic dimension and Lipschitz Constant of network as core drivers of generalization. This framework provides both theoretical insight and practical tools for analyzing  deep networks.

\section*{Impact Statement}

This work aims to advance the understanding of generalization in deep learning by analyzing how embedding geometry and network sensitivity influence performance. Our findings primarily provide theoretical and practical insights for designing and analyzing neural networks. We do not anticipate any direct negative societal consequences from this research. While advances in machine learning can have broad societal implications, the present study is foundational in nature, and any potential downstream applications will depend on how these insights are used in specific contexts.

\bibliography{example_paper}

@article{ansuini2019intrinsic,
  title={Intrinsic dimension of data representations in deep neural networks},
  author={Ansuini, Alessio and Laio, Alessandro and Macke, Jakob H and Zoccolan, Davide},
  journal={Advances in Neural Information Processing Systems},
  volume={32},
  year={2019}
}

@article{levina2004maximum,
  title={Maximum likelihood estimation of intrinsic dimension},
  author={Levina, Elizaveta and Bickel, Peter},
  journal={Advances in neural information processing systems},
  volume={17},
  year={2004}
}

@article{pope2021intrinsic,
  title={The intrinsic dimension of images and its impact on learning},
  author={Pope, Phillip and Zhu, Chen and Abdelkader, Ahmed and Goldblum, Micah and Goldstein, Tom},
  journal={arXiv preprint arXiv:2104.08894},
  year={2021}
}

@article{weed2019sharp,
  title={Sharp asymptotic and finite-sample rates of convergence of empirical measures in Wasserstein distance},
  author={Weed, Jonathan and Bach, Francis},
  journal={Bernoulli},
  volume={25},
  number={4A},
  pages={2620--2648},
  year={2019},
  publisher={JSTOR}
}

@book{villani2009optimal,
  title     = {Optimal Transport: Old and New},
  author    = {Villani, C{\'e}dric},
  series    = {Grundlehren der mathematischen Wissenschaften},
  volume    = {338},
  publisher = {Springer},
  year      = {2009},
  isbn      = {9783540710493},
  doi       = {10.1007/978-3-540-71050-9}
}

@misc{sain1996nature,
  title={The nature of statistical learning theory},
  author={Sain, Stephan R},
  year={1996},
  publisher={Taylor \& Francis}
}

@article{bartlett2002rademacher,
  title={Rademacher and gaussian complexities: Risk bounds and structural results},
  author={Bartlett, Peter L and Mendelson, Shahar},
  journal={Journal of machine learning research},
  volume={3},
  number={Nov},
  pages={463--482},
  year={2002}
}

@inproceedings{arora2018stronger,
  title={Stronger generalization bounds for deep nets via a compression approach},
  author={Arora, Sanjeev and Ge, Rong and Neyshabur, Behnam and Zhang, Yi},
  booktitle={International conference on machine learning},
  pages={254--263},
  year={2018},
  organization={PMLR}
}

@article{amsaleg2018extreme,
  title={Extreme-value-theoretic estimation of local intrinsic dimensionality},
  author={Amsaleg, Laurent and Chelly, Oussama and Furon, Teddy and Girard, St{\'e}phane and Houle, Michael E and Kawarabayashi, Ken-ichi and Nett, Michael},
  journal={Data Mining and Knowledge Discovery},
  volume={32},
  number={6},
  pages={1768--1805},
  year={2018},
  publisher={Springer}
}

@inproceedings{amsaleg2019intrinsic,
  title={Intrinsic dimensionality estimation within tight localities},
  author={Amsaleg, Laurent and Chelly, Oussama and Houle, Michael E and Kawarabayashi, Ken-Ichi and Radovanovi{\'c}, Milo{\v{s}} and Treeratanajaru, Weeris},
  booktitle={Proceedings of the 2019 SIAM international conference on data mining},
  pages={181--189},
  year={2019},
  organization={SIAM}
}

@article{davies2009cluster,
  title={A cluster separation measure},
  author={Davies, David L and Bouldin, Donald W},
  journal={IEEE transactions on pattern analysis and machine intelligence},
  number={2},
  pages={224--227},
  year={2009},
  publisher={Ieee}
}

@article{dyballa2024separability,
  title={A separability-based approach to quantifying generalization: which layer is best?},
  author={Dyballa, Luciano and Gerritz, Evan and Zucker, Steven W},
  journal={arXiv preprint arXiv:2405.01524},
  year={2024}
}

@inproceedings{belcher2020generalisation,
  title={Generalisation and the geometry of class separability},
  author={Belcher, Dominic and Prugel-Bennett, Adam and Dasmahapatra, Srinandan},
  booktitle={NeurIPS 2020 Workshop: Deep Learning through Information Geometry},
  year={2020}
}

@article{bartlett2017spectrally,
  title={Spectrally-normalized margin bounds for neural networks},
  author={Bartlett, Peter L and Foster, Dylan J and Telgarsky, Matus J},
  journal={Advances in neural information processing systems},
  volume={30},
  year={2017}
}

@article{vapnik1994measuring,
  title={Measuring the VC-dimension of a learning machine},
  author={Vapnik, Vladimir and Levin, Esther and Le Cun, Yann},
  journal={Neural computation},
  volume={6},
  number={5},
  pages={851--876},
  year={1994},
  publisher={MIT Press}
}

@article{sontag1998vc,
  title={VC dimension of neural networks},
  author={Sontag, Eduardo D and others},
  journal={NATO ASI Series F Computer and Systems Sciences},
  volume={168},
  pages={69--96},
  year={1998},
  publisher={Springer Verlag}
}

@article{truong2022rademacher,
  title={On rademacher complexity-based generalization bounds for deep learning},
  author={Truong, Lan V},
  journal={arXiv preprint arXiv:2208.04284},
  year={2022}
}

@article{hellstrom2025generalization,
  title={Generalization bounds: Perspectives from information theory and PAC-Bayes},
  author={Hellstr{\"o}m, Fredrik and Durisi, Giuseppe and Guedj, Benjamin and Raginsky, Maxim and others},
  journal={Foundations and Trends{\textregistered} in Machine Learning},
  volume={18},
  number={1},
  pages={1--223},
  year={2025},
  publisher={Now Publishers, Inc.}
}

@article{lotfi2022pac,
  title={PAC-Bayes compression bounds so tight that they can explain generalization},
  author={Lotfi, Sanae and Finzi, Marc and Kapoor, Sanyam and Potapczynski, Andres and Goldblum, Micah and Wilson, Andrew G},
  journal={Advances in Neural Information Processing Systems},
  volume={35},
  pages={31459--31473},
  year={2022}
}

@inproceedings{neyshabur2015norm,
  title={Norm-based capacity control in neural networks},
  author={Neyshabur, Behnam and Tomioka, Ryota and Srebro, Nathan},
  booktitle={Conference on learning theory},
  pages={1376--1401},
  year={2015},
  organization={PMLR}
}

@article{neyshabur2017pac,
  title={A pac-bayesian approach to spectrally-normalized margin bounds for neural networks},
  author={Neyshabur, Behnam and Bhojanapalli, Srinadh and Srebro, Nathan},
  journal={arXiv preprint arXiv:1707.09564},
  year={2017}
}

@inproceedings{williams2018broad,
  title={A broad-coverage challenge corpus for sentence understanding through inference},
  author={Williams, Adina and Nangia, Nikita and Bowman, Samuel},
  booktitle={Proceedings of the 2018 conference of the North American chapter of the association for computational linguistics: human language technologies, volume 1 (long papers)},
  pages={1112--1122},
  year={2018}
}
\bibliographystyle{icml2026}

\newpage
\appendix
\onecolumn

\section*{Overview of Supplementary Materials}

This supplementary material provides detailed derivations, proofs, and experimental analyses that complement the main text. 
Its purpose is to give a complete and self-contained presentation of our theoretical results, as well as extensive experimental validation of the proposed framework. 
The content can be broadly organized into two main parts: theoretical analysis and experimental details.

\paragraph{1. Theoretical Analysis (Sections~\ref{Supplement: proof}).}  
We derive a high-probability, dimension-dependent generalization bound for neural networks, emphasizing the role of intermediate embedding geometry and network Lipschitz properties.  
Key steps include:
\begin{itemize}[leftmargin=*]
    \item \textbf{Notation and preliminaries (Sectio \ref{app:preliminaries}):} Introduce all key symbols and collect standard results on optimal transport and Wasserstein distances.  
    \item \textbf{Risk decomposition via Bayes surrogates (Section~\ref{app:decomposition}):} Split the generalization gap into (A) approximation, (B) oracle statistical, and (C) empirical model terms, enabling Lipschitz-based analysis.  
    \item \textbf{Bounding each component (Section~\ref{app:control}):} Control each term using deterministic Lipschitz arguments and concentration inequalities, producing high-probability bounds that scale with embedding dimension and sample size.  
    \item \textbf{Recovering network effects (Section~\ref{app:network_effects}):} Decompose the oracle Lipschitz constant to separate controllable network contributions from intrinsic distribution-dependent effects, yielding interpretable insights on architecture influence.
\end{itemize}

\paragraph{2. Experimental Analyses (Sections~\ref{detail: Experiments} and \ref{supple: algorithm and hyperparameter}).}  
We provide comprehensive experimental validations to illustrate the theoretical insights and investigate practical aspects of embeddings and network design.  
The main experimental directions are:
\begin{itemize}[leftmargin=*]
    \item \textbf{Synthetic embedding experiments (Section~\ref{detail: synthetic_wass}):} Study how Wasserstein distances depend on intrinsic dimension and sample size using MNIST autoencoders.  
    \item \textbf{Cross-architecture analysis (Section~\ref{detail: cross_arch}):} Analyze embedding distributions and class-wise generalization gaps across multiple ResNet architectures on CIFAR-10/100.  
    \item \textbf{Causal width experiments (Section~\ref{detail: causal_width}):} Explore how hidden layer width affects embedding dimensionality, network Lipschitz constants, and generalization in MNIST MLPs.  
    \item \textbf{Dimensionality estimation and hyperparameter sensitivity (Section~\ref{supple: algorithm and hyperparameter}):} Assess the influence of hyperparameters and estimation methods on intrinsic dimension measurements.
\end{itemize}

Together, these theoretical and empirical components provide a comprehensive understanding of the interplay between embedding geometry, network design, and generalization, complementing and extending the results presented in the main paper.

\section{Supplement of Theoretical Results}
\label{Supplement: proof}

Before presenting the detailed proofs, we first summarize the key notation used throughout this appendix and the main paper.  
This notation table serves as a convenient reference to improve clarity and readability.

\paragraph{Notation summary.}  
Key symbols used throughout the paper.

\begin{longtable}{ll}
\caption{Notation summary for key symbols in the paper.} \\
\toprule
\textbf{Symbol} & \textbf{Meaning} \\
\midrule
\endfirsthead

\toprule
\textbf{Symbol} & \textbf{Meaning} \\
\midrule
\endhead

$\tilde P_k^Z$ & Population embedding distribution at layer $k$. \\
$\hat{\tilde P}_{k,n}^Z$ & Empirical embedding distribution from $n$ samples. \\
$\hat\mu_n$ & Empirical measure of $n$ i.i.d.\ samples. \\
\midrule
$R(F)$ & Population risk of predictor $F$. \\
$\hat R_n(F)$ & Empirical risk on validation set. \\
$\mathrm{gen}(F)$ & Generalization gap $R(F)-\hat R_n(F)$. \\
\midrule
$F_{\le k}$ & Encoder mapping input $x$ to embedding $z$ at layer $k$. \\
$F_k$ & Tail map from layer-$k$ embedding $z$ to output. \\
$F(x)$ & Overall predictor $F_k(F_{\le k}(x))$. \\
$F_k^*$ & Bayes predictor from layer-$k$ embedding $z$ to output. \\
\midrule
$d_k$ & Intrinsic dimension of $\tilde P_k^Z$. \\
$D_k$ & $\ell_1$-diameter of support of $\tilde P_k^Z$. \\
$\mathcal{W}_1(\cdot,\cdot)$ & 1-Wasserstein distance. \\
\midrule
$L_k(F)$ & Lipschitz constant of network tail from layer $k$ to output. \\
$L_k(F^*)$ & Lipschitz constant of Bayes predictor from layer $k$ to output. \\
$M_F$ & Bound on loss gradient wrt network output. \\
$M_{F^*}$ & Bound on loss gradient wrt Bayes predictor output. \\
\midrule
$\ell$ & Loss function (e.g., squared loss, cross-entropy). \\
$B_\ell$ & Uniform bound on loss values. \\
\bottomrule
\end{longtable}

\paragraph{Roadmap of the appendix.}  
This appendix provides a complete derivation of the dimension-dependent generalization bound stated in Theorem~\ref{thm:main_observed_mcd}.  
The proof is organized into four main steps:

\begin{enumerate}[leftmargin=*]
    \item \textbf{Preliminaries (Subsection~\ref{app:preliminaries}):}  
    We collect standard technical tools used throughout the proofs, including optimal transport results and Wasserstein bounds for Lipschitz functions.  

    \item \textbf{Risk decomposition via Bayes surrogates (Subsection~\ref{app:decomposition}):}  
    In classification settings, labels are discrete, so the observed loss is non-differentiable with respect to embeddings.  
    We introduce layer-wise Bayes predictors as continuous surrogates, leading to a decomposition of the generalization error into three terms: (A) approximation gap, (B) oracle statistical gap, and (C) empirical model gap.

    \item \textbf{Controlling the decomposed terms (Subsection~\ref{app:control}):}  
    Each term is bounded explicitly.  
    (A) and (C) are controlled by irreducible label noise, while (B) is controlled via the 1-Wasserstein distance between empirical and population embeddings combined with the oracle loss Lipschitz constant.  
    Concentration inequalities yield high-probability bounds scaling with embedding dimension and sample size.

    \item \textbf{Recovering network effects (Subsection~\ref{app:network_effects}):}  
    The oracle Lipschitz constant \(L_k(g)\) is decomposed as
    \[
        L_k(g) \le L_k(F)\,M_F + L_k(F^*)\,M_{F^*},
    \]
    separating controllable network-dependent and intrinsic Bayes predictor contributions.  
    Substituting this into the previous bounds connects embedding geometry, statistical concentration, and network design.
\end{enumerate}

Overall, these steps provide a clear, high-probability generalization bound that disentangles statistical, architectural, and label-noise contributions.

\subsection{Preliminaries: Useful Lemmas and Theorems}
\label{app:preliminaries}

In this subsection we collect several standard results that will be used throughout the proofs. 
They are presented here to avoid interruptions in the main arguments later.

\subsubsection{Existence of Optimal Transport Plan}

\begin{theorem}[Existence of Optimal Transport Plan {\citep{villani2009optimal}}]
\label{thm:exist_opt_plan}
Let $(\mathcal{X},\mu)$ and $(\mathcal{Y},\nu)$ be Polish probability spaces, and let 
\[
c : \mathcal{X} \times \mathcal{Y} \;\longrightarrow\; \mathbb{R} \cup \{+\infty\}
\]
be a lower semicontinuous cost function.  
Then there exists a coupling $\gamma^* \in \Pi(\mu,\nu)$ that minimizes the expected cost:
\[
\int_{\mathcal{X}\times\mathcal{Y}} c(x,y)\,d\gamma^*(x,y)
= \inf_{\gamma\in\Pi(\mu,\nu)}\int_{\mathcal{X}\times\mathcal{Y}} c(x,y)\,d\gamma(x,y).
\]
In particular, for the 1-Wasserstein cost $c(z,z')=\|z-z'\|_1$ on a Polish space there exists an optimal coupling attaining $\mathcal W_1$.
\end{theorem}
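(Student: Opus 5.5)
The statement to prove is the existence of an optimal coupling, Theorem~\ref{thm:exist_opt_plan}. I would use the direct method of the calculus of variations on the set of couplings $\Pi(\mu,\nu)$ with the weak topology.

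\textbf{Step 1: compactness.} I would show that $\Pi(\mu,\nu)$ is tight, and hence, by Prokhorov's theorem, relatively compact. On a Polish space every single probability measure is tight. So for any $\eta>0$ there are compact sets $K_1\subseteq\mathcal X$ and $K_2\subseteq\mathcal Y$ with $\mu(K_1^c)\le\eta$ and $\nu(K_2^c)\le\eta$. For every $\gamma\in\Pi(\mu,\nu)$ the marginal constraints then give
\[
\gamma\big((K_1\times K_2)^c\big)\le \mu(K_1^c)+\nu(K_2^c)\le 2\eta .
\]
This bound is uniform over all couplings, which is exactly tightness.

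\textbf{Step 2: closedness.} I would check that $\Pi(\mu,\nu)$ is weakly closed. The projections onto each factor are continuous, so they commute with weak limits, and the marginal constraints are preserved in the limit. Together with Step~1, $\Pi(\mu,\nu)$ is weakly compact.

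\textbf{Step 3: lower semicontinuity.} This is the main technical point, because $c$ may be unbounded and take the value $+\infty$. I would first assume $c$ is bounded below, which is the standard hypothesis and holds for $c(z,z')=\|z-z'\|_1\ge 0$. A lower semicontinuous function bounded below on a metric space is the pointwise increasing limit of bounded continuous functions $c_m$; for instance, take inf-convolutions $c_m(x)=\inf_y\{\min(c(y),m)+m\,d(x,y)\}$. For each $m$ the map $\gamma\mapsto\int c_m\,d\gamma$ is weakly continuous. By monotone convergence, $\int c\,d\gamma=\sup_m\int c_m\,d\gamma$. A supremum of continuous functionals is lower semicontinuous, so $\gamma\mapsto\int c\,d\gamma$ is weakly l.s.c.

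\textbf{Step 4: conclusion.} I would take a minimizing sequence $\gamma_j\in\Pi(\mu,\nu)$. By Steps~1--2 it has a subsequence converging weakly to some $\gamma^*\in\Pi(\mu,\nu)$. By Step~3,
\[
\int c\,d\gamma^*\le\liminf_j\int c\,d\gamma_j=\inf_{\Pi(\mu,\nu)}\int c\,d\gamma,
\]
so $\gamma^*$ is optimal. If the infimum is $+\infty$, any coupling is optimal.

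For the $\mathcal W_1$ claim, the cost $\|z-z'\|_1$ is continuous and nonnegative on a Polish space, so the argument above applies directly. Under finite first moments the infimum is also finite, so the optimal coupling attains $\mathcal W_1$.
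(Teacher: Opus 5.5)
Your argument is correct and is the standard proof from the cited source; the paper itself gives no proof and only refers to Villani. The steps match: tightness of $\Pi(\mu,\nu)$ via Prokhorov, weak closedness of the marginal constraints, lower semicontinuity of $\gamma\mapsto\int c\,d\gamma$ by monotone approximation with bounded continuous costs, and then the direct method. You were right to impose a lower bound on $c$: the paper's statement omits it, but Villani assumes $c(x,y)\ge a(x)+b(y)$ with upper semicontinuous $a\in L^1(\mu)$, $b\in L^1(\nu)$ so that $\int c\,d\gamma$ is well defined, and your proof covers that case by replacing $c$ with $c-a-b\ge 0$, which shifts every coupling's cost by the same constant $\int a\,d\mu+\int b\,d\nu$.
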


\subsubsection{Wasserstein Bound for Lipschitz Functions}

\begin{lemma}[Expectation difference controlled by $W_1$]
\label{lem:W1_bound}
Let $(\mathbb R^{d},\|\cdot\|_1)$ be equipped with the $\ell_1$ metric, and let
$\mu,\nu$ be probability measures with finite first moments.
If $h:\mathbb R^{d}\to\mathbb R$ is $L_h$-Lipschitz with respect to $\ell_1$, i.e.,
\[
|h(z)-h(z')|\le L_h\|z-z'\|_1 \quad \forall z,z',
\]
then
\[
\Big|\int h\,d\mu - \int h\,d\nu\Big| \;\le\; L_h \,\mathcal W_1(\mu,\nu).
\]
\end{lemma}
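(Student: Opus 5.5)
This is the Kantorovich–Rubinstein inequality in its easy direction, so I will prove it directly from the definition of $\mathcal W_1$ through couplings, without invoking duality.

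The first step is to take any coupling $\gamma\in\Gamma(\mu,\nu)$. Because $h$ is Lipschitz, $|h(z)|\le |h(0)|+L_h\|z\|_1$, and both measures have finite first moments. Hence $h$ is integrable under $\mu$ and under $\nu$, and $h(z)-h(z')$ is integrable under $\gamma$.

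The second step uses the marginal constraints to write
\[
\int h\,d\mu-\int h\,d\nu=\int_{\mathbb R^d\times\mathbb R^d}\big(h(z)-h(z')\big)\,d\gamma(z,z').
\]
Taking absolute values, moving them inside the integral, and applying the Lipschitz bound pointwise gives
\[
\Big|\int h\,d\mu-\int h\,d\nu\Big|\le L_h\int\|z-z'\|_1\,d\gamma(z,z').
\]

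The third step takes the infimum over $\gamma\in\Gamma(\mu,\nu)$. The left-hand side does not depend on $\gamma$, so this yields $L_h\,\mathcal W_1(\mu,\nu)$ with $\mathcal W_1$ taken for the $\ell_1$ cost. Alternatively, Theorem~\ref{thm:exist_opt_plan} provides an optimal coupling $\gamma^*$ for the cost $\|z-z'\|_1$, and one can plug $\gamma^*$ in directly. Either route works, since only an infimum is needed.

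The only point needing any care is the integrability check in the first step. It justifies splitting $\int (h(z)-h(z'))\,d\gamma$ into two separate integrals, and the finite first moments together with linear growth of $h$ handle it. The rest is routine.
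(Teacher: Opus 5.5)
Your proposal is correct and follows essentially the same route as the paper: bound the integral of $h(z)-h(z')$ under an arbitrary coupling by $L_h\int\|z-z'\|_1\,d\gamma$, then take the infimum over couplings (the paper's mention of Kantorovich--Rubinstein duality is nominal, since its actual argument is this coupling bound). The only differences are cosmetic. The paper gets the absolute value by swapping $\mu$ and $\nu$, whereas you move it inside the integral. Your integrability check, which uses linear growth of $h$ and finite first moments, supplies a detail the paper leaves implicit.
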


\begin{proof}
This follows directly from the Kantorovich--Rubinstein dual representation of $\mathcal W_1$ on Polish metric spaces.

By definition of $\mathcal W_1$ and for any coupling $\pi\in\Pi(\mu,\nu)$,
\[
\int h\,d\mu - \int h\,d\nu
= \iint \big(h(z)-h(z')\big)\,d\pi(z,z')
\le \iint L_h \|z-z'\|_1 \, d\pi(z,z').
\]
Taking infimum over all couplings $\pi$ gives the claim. The absolute value follows by symmetry (swapping $\mu,\nu$).
\end{proof}

\subsection{Risk Decomposition via Bayes Surrogates}
\label{app:decomposition}

\paragraph{Motivation.}
In classification, the label $Y$ is discrete, so the observed loss 
$\ell(F(X),Y)$ is not differentiable with respect to embeddings $Z_k$. 
This obstructs a direct Lipschitz-based analysis of the risk, 
which is central to our approach. 
To address this, we introduce at each layer $k$ the \emph{Bayes predictor} $F_k^*(Z_k)$, 
a continuous surrogate for the discrete label. 
Replacing $Y$ with $F_k^*(Z_k)$ yields the \emph{oracle loss}, 
which is differentiable in $Z_k$ and hence amenable to Lipschitz/Wasserstein analysis. 
The cost of this replacement is an additional error term capturing the 
mismatch between observed and oracle risks. 
This term corresponds to irreducible label randomness and will be explicitly controlled. 

\begin{definition}[Observed and oracle risks]
Let $\ell:\mathbb R^C\times\mathbb R^C\to\mathbb R$ be a measurable loss.  
At the input layer, the observed risks are
\[
R_0^{\mathrm{obs}} := \mathbb E_{(X,Y)\sim\mathcal D}[\ell(F(X),Y)], 
\qquad
\hat R_{0,n}^{\mathrm{obs}} := \frac{1}{n}\sum_{i=1}^n \ell(F(x_i),y_i).
\]
At layer $k$, the oracle loss is defined as
\[
g_k(z) := \ell(F_k(z),F_k^*(z)),
\]
with population and empirical oracle risks
\[
R_k^{\mathrm{oracle}} := \mathbb E_{Z_k\sim\tilde P_k^Z}[g_k(Z_k)], 
\qquad
\hat R_{k,n}^{\mathrm{oracle}} := \frac{1}{n}\sum_{i=1}^n g_k(z_{k,i}).
\]
\end{definition}

The following identity is purely algebraic and holds deterministically for any fixed dataset and predictor.

\begin{proposition}[Risk decomposition]
\label{prop:decomp_app}
For any predictor $F$ and any intermediate layer $k$,
\[
R_0^{\mathrm{obs}} - \hat R_{0,n}^{\mathrm{obs}}
= (R_0^{\mathrm{obs}} - R_k^{\mathrm{oracle}})
+ (R_k^{\mathrm{oracle}} - \hat R_{k,n}^{\mathrm{oracle}})
+ (\hat R_{k,n}^{\mathrm{oracle}} - \hat R_{0,n}^{\mathrm{obs}}).
\]
\end{proposition}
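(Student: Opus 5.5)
The identity in Proposition~\ref{prop:decomp_app} is a telescoping sum, so the plan is to verify it by direct algebraic cancellation. No probabilistic or analytic input is needed. The only preliminary is to check that every quantity appearing is a well-defined finite real number, so that adding and subtracting is legitimate.

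\textbf{Well-definedness.} First I would fix the predictor $F$, the layer $k$, and the dataset $\{(x_i,y_i)\}_{i=1}^n$. I would then note that:
\begin{itemize}
\item $R_0^{\mathrm{obs}}$ and $R_k^{\mathrm{oracle}}$ are finite. By Assumption~\ref{assump:measurable} the pushforward $\tilde P_k^Z$ is well defined. Assumption~\ref{assump:lipschitz_maps} makes $F_k$ and $F_k^*$ continuous on a neighbourhood of the bounded support (Assumption~\ref{assump:bounded}). With the continuity of $\ell$ from Assumption~\ref{assump:loss}, the oracle loss $g_k$ is therefore bounded on that support, so its expectation is finite. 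For the observed risk, I would take integrability as part of the standing setup, since otherwise the generalization gap itself is undefined.
\item $\hat R_{0,n}^{\mathrm{obs}}$ and $\hat R_{k,n}^{\mathrm{oracle}}$ are finite averages of finite numbers. Here $z_{k,i}=F_{\le k}(x_i)$ is the embedding of the $i$-th sample.
\end{itemize}

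\textbf{Cancellation.} Next I would expand the right-hand side. The term $R_k^{\mathrm{oracle}}$ appears once with a minus sign in the first bracket and once with a plus sign in the second bracket. Likewise, $\hat R_{k,n}^{\mathrm{oracle}}$ appears with a minus sign in the second bracket and a plus sign in the third. After these cancel, the sum collapses to $R_0^{\mathrm{obs}}-\hat R_{0,n}^{\mathrm{obs}}$, which is the left-hand side. The identity is deterministic because no expectation over the sample is taken at this stage.

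\textbf{Main obstacle.} The only delicate point is the finiteness of $R_0^{\mathrm{obs}}$ and $R_k^{\mathrm{oracle}}$, because an $\infty-\infty$ would break the cancellation. I would handle this with the boundedness and continuity assumptions above. Alternatively, I would simply state the proposition under the proviso that the risks are finite, which is implicit in calling it purely algebraic. I would also remark why the decomposition is useful: the middle term is an empirical-process term for the single fixed function $g_k$, to be bounded later via Lemma~\ref{lem:W1_bound} and Theorem~\ref{thm:weedbach}, while the outer terms carry the label-noise mismatch.
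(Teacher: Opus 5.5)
Your proposal is correct and follows the paper's approach. The paper simply asserts the identity as purely algebraic and deterministic, a telescoping sum in which $R_k^{\mathrm{oracle}}$ and $\hat R_{k,n}^{\mathrm{oracle}}$ cancel, with no further proof. Your check that every term is finite, so that no $\infty-\infty$ can arise, is a sensible addition that the paper leaves implicit.
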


\paragraph{Interpretation.}
The decomposition separates the generalization gap into three terms:
\begin{itemize}[leftmargin=*]
    \item \textbf{Approximation gap:} 
    $R_0^{\mathrm{obs}} - R_k^{\mathrm{oracle}}$ 
    measures the loss of information when replacing discrete labels by the Bayes predictor at layer $k$. 
    \item \textbf{Oracle statistical gap:} 
    $R_k^{\mathrm{oracle}} - \hat R_{k,n}^{\mathrm{oracle}}$ 
    is the population-to-sample deviation of the oracle loss, 
    the term to be controlled via Lipschitz continuity and Wasserstein concentration. 
    \item \textbf{Empirical model gap:} 
    $\hat R_{k,n}^{\mathrm{oracle}} - \hat R_{0,n}^{\mathrm{obs}}$ 
    quantifies how network predictions differ from Bayes-optimal predictions under the empirical distribution. 
\end{itemize}

\paragraph{Summary.}
The observed generalization error is thus expressed as an oracle component 
(amenable to Lipschitz/Wasserstein analysis) plus two additional error terms 
that capture irreducible label noise and model approximation. 
This motivates analyzing the Lipschitz constant of the oracle loss $g_k(z)$, 
which we do next.

\subsection{Lipschitz Constant of the Layer-Wise Loss}
\label{app:lipschitz}

Having introduced the oracle loss $g_k(z)=\ell(F_k(z),F_k^*(z))$, 
we now analyze its Lipschitz continuity with respect to the embedding $z$. 
This is possible because both arguments of $g_k$ are continuous functions of $z$.

\paragraph{Gradient and Lipschitz bound.}
\begin{lemma}
\label{lem:grad_bound_app}
Suppose Assumptions \ref{assump:measurable}--\ref{assump:loss} hold.
Then for all $z\in U_k$,
\[
\nabla g_k(z) 
= \nabla F_{k}(z)^\top \,\partial_F \ell(F_{k}(z),F_k^*(z))
+ \nabla F_k^*(z)^\top \,\partial_{F^*} \ell(F_{k}(z),F_k^*(z)).
\]
Hence
\[
\|\nabla g_k(z)\|_\infty
\;\le\;
\|\nabla F_{k}(z)\|_{\mathrm{op}} \,\|\partial_F \ell\|_\infty
+ \|\nabla F_k^*(z)\|_{\mathrm{op}} \,\|\partial_{F^*}\ell\|_\infty.
\]
If $\|\partial_F \ell\|_\infty\le M_F$, $\|\partial_{F^*}\ell\|_\infty\le M_{F^*}$,
and the Jacobians satisfy $\|\nabla F_{k}(z)\|_{\mathrm{op}}\le L_k(F)$,
$\|\nabla F_k^*(z)\|_{\mathrm{op}}\le L_k(F^*)$, then
\[
L_k(g):=\sup_{z\in U_k}\|\nabla g_k(z)\|_\infty
\;\le\; L_k(F)\,M_F + L_k(F^*)\,M_{F^*}.
\]
\end{lemma}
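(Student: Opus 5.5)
The plan is to treat $g_k=\ell\circ(F_k,F_k^*)$ as a composition of two $C^1$ maps on $U_k$, apply the multivariate chain rule, and then bound each resulting term by a Hölder/operator-norm inequality. First, by Assumption~\ref{assump:lipschitz_maps} the maps $F_k$ and $F_k^*$ are differentiable on the open set $U_k$; their Jacobians are exactly what the constants $L_k(F),L_k(F^*)$ are defined through. By Assumption~\ref{assump:loss}, $\ell$ is $C^1$ on $\mathbb R^C\times\mathbb R^C$. Set $\Phi(z)=(F_k(z),F_k^*(z))\in\mathbb R^{2C}$, so that $g_k=\ell\circ\Phi$. The chain rule gives $\nabla g_k(z)=\nabla\Phi(z)^\top\nabla\ell(\Phi(z))$. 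Splitting $\nabla\ell=(\partial_F\ell,\partial_{F^*}\ell)$ and $\nabla\Phi=\big[\nabla F_k;\nabla F_k^*\big]$ yields the displayed two-term identity.

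Second, the norm bound follows from the triangle inequality applied to the two summands, together with $\|A^\top v\|\le\|A\|_{\mathrm{op}}\|v\|$ for each term. Substituting the uniform bounds $M_F,M_{F^*}$ and the Jacobian bounds, and then taking the supremum over $z\in U_k$, gives $L_k(g)\le L_k(F)M_F+L_k(F^*)M_{F^*}$.

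I expect the main obstacle to be \emph{norm bookkeeping} rather than anything conceptual. The loss gradients are controlled in $\ell_\infty$, the Jacobian constants are spectral norms, and the statement measures $\nabla g_k$ in $\ell_\infty$. Downstream, Lemma~\ref{lem:W1_bound} needs Lipschitzness with respect to $\ell_1$, and that is dual to exactly the $\ell_\infty$ norm of the gradient.

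Read literally, the inequality requires $\|\nabla F_k(z)\|_{\mathrm{op}}$ to mean the $\infty\to\infty$ operator norm of $\nabla F_k(z)^\top$, i.e.\ the $1\to1$ norm of $\nabla F_k(z)$. My first step here is to interpret $\|\cdot\|_{\mathrm{op}}$ in the lemma as that induced norm, under which $\|\nabla F_k(z)^\top v\|_\infty\le\|\nabla F_k(z)\|_{\mathrm{op}}\|v\|_\infty$ holds exactly. I will then note that this agrees with the spectral-norm constant of Assumption~\ref{assump:lipschitz_maps} only up to dimension-dependent factors such as $\sqrt{C}$ or $\sqrt{\dim\mathcal Z_k}$. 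These factors can either be absorbed into $L_k(F)$ and $L_k(F^*)$, or avoided by redefining the constants with matched norms.

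Finally, to pass from the gradient bound to a genuine Lipschitz constant, I apply the mean value inequality along segments: $|g_k(z)-g_k(z')|\le\sup\|\nabla g_k\|_\infty\|z-z'\|_1$. This is valid whenever the segment $[z,z']$ lies in $U_k$, which holds if $U_k$ is convex. Otherwise the bound holds locally, which suffices on a convex neighborhood of the support; the only nontrivial assumption needed here is that such a neighborhood exists.
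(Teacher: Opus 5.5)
Your proof is correct and follows the paper's: the chain rule for $g_k=\ell\circ(F_k,F_k^*)$, then the bound $\|A^\top v\|_\infty\le\|A\|_{\mathrm{op}}\|v\|_\infty$ on each summand, then substitution of $M_F$, $M_{F^*}$ and the Jacobian bounds, and finally a supremum over $U_k$. Two of your remarks are valid refinements that the paper's one-line proof leaves implicit: reading $\|\cdot\|_{\mathrm{op}}$ as the $\ell_1\to\ell_1$ norm of the Jacobian, which is bounded by the spectral-norm constants of Assumption~\ref{assump:lipschitz_maps} only up to a factor such as $\sqrt{C}$, and the convexity caveat for the mean-value step; neither is needed for the lemma itself, since $L_k(g)$ is defined as a supremum of gradient norms rather than as a Lipschitz constant.
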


\begin{proof}
The chain rule gives the gradient expression.
Applying $\|A^\top v\|_\infty \le \|A\|_{\mathrm{op}}\|v\|_\infty$
and substituting the uniform bounds yields the claim.
\end{proof}

\noindent\textbf{Remark.}  
The bound cleanly separates two contributions:
(i) the network-dependent Lipschitz constant $L_k(F)$, controllable by architecture or regularization,
and (ii) the distribution-dependent Lipschitz constant $L_k(F^*)$, 
reflecting the inherent complexity of the Bayes predictor. 
Thus the oracle loss Lipschitz constant factors into 
a controllable and an uncontrollable component, 
which will play distinct roles in the final generalization bound.

\subsection{Controlling the Decomposed Terms}
\label{app:control}

\paragraph{Overview of the approach.}
Proposition~\ref{prop:decomp_app} decomposes the generalization gap into three terms:
\[
\underbrace{R_0^{\mathrm{obs}} - R_k^{\mathrm{oracle}}}_{(A)\,\text{approximation gap}}
,\quad
\underbrace{R_k^{\mathrm{oracle}} - \hat R_{k,n}^{\mathrm{oracle}}}_{(B)\,\text{oracle statistical gap}}
,\quad
\underbrace{\hat R_{k,n}^{\mathrm{oracle}} - \hat R_{0,n}^{\mathrm{obs}}}_{(C)\,\text{empirical model gap}}.
\]
We now control these terms separately:
\begin{itemize}
  \item (A) measures the error incurred by replacing labels $Y$ with the Bayes surrogate $F_k^*(Z_k)$.  
  \item (B) measures the statistical deviation between population and empirical distributions of embeddings, for the oracle loss.  
  \item (C) measures the discrepancy between network predictions and Bayes-optimal predictions under the empirical distribution.  
\end{itemize}
Each of (A), (B), (C) will be treated in turn.

\subsubsection{Bounding the approximation gap (A).}

\begin{lemma}[Control of approximation gap]
\label{lem:gap_A}
Assume the loss $\ell:\mathbb R^C\times\mathbb R^C\to\mathbb R$ is Lipschitz
in its second argument with constant $M_{F^*}$ (Assumption~\ref{assump:loss}).
Then for any predictor $F$ and any layer $k$,
\[
\big|R_0^{\mathrm{obs}} - R_k^{\mathrm{oracle}}\big|
\;\le\; M_{F^*}\; \mathbb E_{Z\sim\tilde P_k^Z}\big[\|Y - F_k^*(Z)\|_1\big].
\]
\end{lemma}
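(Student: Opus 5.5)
The plan is to express both risks as expectations over the same joint law and then compare the integrands pointwise. Write $Z_k=F_{\le k}(X)$, so that $F(X)=F_k(Z_k)$ by Definition~\ref{def:decomp}. By Assumption~\ref{assump:measurable} the pushforward $\tilde P_k^Z$ is the law of $Z_k$ when $(X,Y)\sim P_{X,Y}$. The oracle risk
\[
R_k^{\mathrm{oracle}}=\mathbb E_{Z_k\sim\tilde P_k^Z}\big[\ell(F_k(Z_k),F_k^*(Z_k))\big]
\]
can therefore be rewritten as an expectation over $(X,Y)\sim P_{X,Y}$, namely $\mathbb E[\ell(F(X),F_k^*(Z_k))]$. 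This puts the two risks in the form
\[
R_0^{\mathrm{obs}}-R_k^{\mathrm{oracle}}=\mathbb E_{(X,Y)}\Big[\ell\big(F(X),Y\big)-\ell\big(F(X),F_k^*(Z_k)\big)\Big],
\]
where the first argument of $\ell$ is identical in both terms. The label $Y$ is identified with its one-hot encoding in $\mathbb R^C$, as is implicit in the statement.

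The core step is a pointwise Lipschitz bound in the second argument of $\ell$. Fix $u=F(X)$. Apply the mean value theorem, or the fundamental theorem of calculus along the segment $v_t=F_k^*(Z_k)+t\,(Y-F_k^*(Z_k))$ for $t\in[0,1]$. This is valid because $\ell$ is $C^1$ in both arguments by Assumption~\ref{assump:loss}. It gives
\[
\big|\ell(u,Y)-\ell(u,F_k^*(Z_k))\big|\le \sup_t\|\nabla_v\ell(u,v_t)\|_\infty\,\|Y-F_k^*(Z_k)\|_1\le M_{F^*}\|Y-F_k^*(Z_k)\|_1.
\]
The middle inequality is Hölder duality between $\ell_\infty$ and $\ell_1$, applied to $\int_0^1\nabla_v\ell(u,v_t)\cdot(Y-F_k^*(Z_k))\,dt$. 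Since the gradient bound in Assumption~\ref{assump:loss} is global, the segment does not need to stay inside $U_k$.

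Finally, take absolute values inside the expectation (Jensen) and integrate the pointwise bound. The result is $M_{F^*}\,\mathbb E\|Y-F_k^*(Z_k)\|_1$, which is exactly the claimed right-hand side. The expectation is written with respect to $Z\sim\tilde P_k^Z$ in the statement but is really over the joint law of $(Z_k,Y)$.

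The main thing to check is integrability, so that the difference of expectations is well defined and equals the expectation of the difference. If $\mathbb E\|Y-F_k^*(Z_k)\|_1=\infty$ the bound is trivial. Otherwise, the pointwise bound together with Assumption~\ref{assump:bayes_bounded} shows that $\ell(F(X),Y)$ is integrable whenever $\ell(F(X),F_k^*(Z_k))$ is, and conversely. If neither is integrable, both risks are infinite and I will simply note that the statement presumes finite risks.
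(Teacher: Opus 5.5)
Your proof is correct and follows the paper's argument: a pointwise bound $|\ell(F(x),y)-\ell(F_k(z),F_k^*(z))|\le M_{F^*}\|y-F_k^*(z)\|_1$ (using $F(x)=F_k(z)$), followed by taking expectations over $(X,Y)$. You also make explicit what the paper leaves implicit: the pushforward rewriting of $R_k^{\mathrm{oracle}}$, the derivation of the $\ell_1$-Lipschitz constant from the $\ell_\infty$ gradient bound via the fundamental theorem of calculus and H\"older duality, and integrability. Your appeal to Assumption~\ref{assump:bayes_bounded} in the integrability step is unnecessary, since the pointwise bound alone transfers integrability between the two losses.
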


\begin{proof}
For any sample $(x,y)$ with embedding $z=h_{\le k}(x)$,
\[
\big|\ell(F(x),y) - \ell(F_k(z),F_k^*(z))\big|
\le M_{F^*}\,\|y - F_k^*(z)\|_1,
\]
by Lipschitz continuity of $\ell$ in the second argument.
Taking expectation over $(X,Y)\sim\mathcal D$ yields the result.
\end{proof}

\subsubsection{Bounding the oracle statistical gap (B).}

\begin{lemma}[Oracle risk controlled by $W_1$]
\label{lem:gap_B}
For any predictor $F\in\mathcal F$ and any layer $k$,
\[
\big|R_k^{\mathrm{oracle}} - \hat R_{k,n}^{\mathrm{oracle}}\big|
\;\le\; L_k(g)\; \mathcal W_1\big(\tilde P_k^Z,\hat{\tilde P}_{k,n}^Z\big).
\]
where \(L_k(g)\) is the Lipschitz constant of \(g_k(z)=\ell(F_k(z),F_k^*(z))\)
with respect to the $\ell_1$-metric, as given in Lemma~\ref{lem:grad_bound_app}.
\end{lemma}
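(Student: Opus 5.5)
The plan is to recognize both risks as integrals of the single function $g_k$ against two probability measures on $\mathcal Z_k$, and then apply Lemma~\ref{lem:W1_bound}. By definition,
\[
R_k^{\mathrm{oracle}}=\int g_k\,d\tilde P_k^Z,
\qquad
\hat R_{k,n}^{\mathrm{oracle}}=\frac1n\sum_{i=1}^n g_k(z_{k,i})=\int g_k\,d\hat{\tilde P}_{k,n}^Z,
\]
where the second identity follows because $\hat{\tilde P}_{k,n}^Z$ is the average of Dirac masses at $z_{k,i}=F_{\le k}(x_i)$. Both measures have finite first moments, since by Assumption~\ref{assump:bounded} they are supported in a set of finite $\ell_1$-diameter $D_k$ (the empirical atoms lie in the support almost surely). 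The claimed inequality is then exactly Lemma~\ref{lem:W1_bound} with $h=g_k$ and $L_h=L_k(g)$, provided $g_k$ is $L_k(g)$-Lipschitz with respect to $\|\cdot\|_1$ on the relevant domain.

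The substantive step is therefore to convert the gradient bound of Lemma~\ref{lem:grad_bound_app} into an $\ell_1$-Lipschitz estimate. For $z,z'$ with the segment $[z,z']\subset U_k$, the mean value theorem gives
\[
|g_k(z)-g_k(z')| = \Big|\int_0^1 \nabla g_k(z'+t(z-z'))^\top (z-z')\,dt\Big| \le \sup_{U_k}\|\nabla g_k\|_\infty\,\|z-z'\|_1 ,
\]
using Hölder duality between $\ell_\infty$ and $\ell_1$. This is why Lemma~\ref{lem:grad_bound_app} measures $\nabla g_k$ in $\|\cdot\|_\infty$, which yields $|g_k(z)-g_k(z')|\le L_k(g)\|z-z'\|_1$. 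Continuous differentiability of $g_k$ on $U_k$ follows from Assumptions~\ref{assump:lipschitz_maps}--\ref{assump:loss} and the chain rule.

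I expect the main obstacle to be that $U_k$ need not be convex, so a segment between two support points may leave $U_k$. I would handle this in one of two ways. The first option is to assume, or arrange, that $U_k$ contains the convex hull of the support, which is bounded by Assumption~\ref{assump:bounded}. The second option is to observe that Lemma~\ref{lem:W1_bound} only uses the Lipschitz bound on pairs $(z,z')$ charged by a coupling, both of which lie in the support. One can then extend $g_k|_{\operatorname{supp}}$ to all of $\mathbb R^{d}$ by the McShane extension $\tilde g(z)=\inf_{z'}\{g_k(z')+L_k(g)\|z-z'\|_1\}$, which preserves the $\ell_1$-Lipschitz constant. Since both measures live on the support, $\int\tilde g=\int g_k$ under each. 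With either fix the result follows; I would state the convex-hull condition explicitly as the cleanest route.
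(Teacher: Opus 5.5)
Your proposal follows the paper's proof: both oracle risks are integrals of $g_k$ against $\tilde P_k^Z$ and $\hat{\tilde P}_{k,n}^Z$, and Kantorovich--Rubinstein duality (Lemma~\ref{lem:W1_bound}) gives the bound. The paper simply treats $L_k(g)$ as the $\ell_1$-Lipschitz constant of $g_k$. Your mean-value/H\"older conversion from $\sup_{U_k}\|\nabla g_k\|_\infty$, together with the convex-hull condition, fills a step the paper skips.

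That condition is genuinely needed under the gradient-sup definition. Take $U_k$ to be two disjoint balls around two atoms, with $g_k$ constant on each ball but with different values. Then $\sup_{U_k}\|\nabla g_k\|_\infty=0$, yet the risk gap need not vanish.

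Drop the McShane alternative, though. The extension $\tilde g$ agrees with $g_k$ on the support only if $g_k$ is already $L_k(g)$-Lipschitz there, and that is exactly what non-convexity threatens. So McShane can only globalize a Lipschitz bound you already have on the support; it cannot produce one.
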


\begin{proof}
By Kantorovich-Rubinstein duality, for any $L$-Lipschitz function $f$,
\[
\Big|\int f\,d\mu - \int f\,d\nu\Big|\;\le\; L\,W_1(\mu,\nu).
\]
Applying this with $f=g_k$, $\mu=\tilde P_k^Z$, and $\nu=\hat{\tilde P}_{k,n}^Z$,
and recalling that $g_k$ has Lipschitz constant $L_k(g)$,
gives the desired bound.
\end{proof}

\subsubsection{Bounding the empirical model gap (C).}

\begin{lemma}[Control of empirical model gap]
\label{lem:gap_C}
Under the same assumptions as Lemma~\ref{lem:gap_A}, for any predictor $F$ and any layer $k$,
\[
\big|\hat R_{k,n}^{\mathrm{oracle}} - \hat R_{0,n}^{\mathrm{obs}}\big|
\;\le\; M_{F^*}\; \frac{1}{n}\sum_{i=1}^n \|y_i - F_k^*(z_{k,i})\|_1.
\]
\end{lemma}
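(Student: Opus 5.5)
The plan mirrors the proof of Lemma~\ref{lem:gap_A}, applied pointwise to the sample rather than in expectation. First, write the difference as an average of per-sample differences. Since the network factorizes as in Definition~\ref{def:decomp}, with $z_{k,i} = F_{\le k}(x_i)$, we have $F(x_i) = F_k(z_{k,i})$. Hence
\[
\hat R_{k,n}^{\mathrm{oracle}} - \hat R_{0,n}^{\mathrm{obs}}
= \frac{1}{n}\sum_{i=1}^n \Big( \ell\big(F_k(z_{k,i}), F_k^*(z_{k,i})\big) - \ell\big(F_k(z_{k,i}), y_i\big) \Big).
\]
In each summand the first argument of $\ell$ is identical, and only the second argument changes, from $y_i$ to $F_k^*(z_{k,i})$.

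The key step is a pointwise Lipschitz bound in the second argument. Fix $u = F_k(z_{k,i})$ and apply the mean value theorem along the segment $v_t = y_i + t\,(F_k^*(z_{k,i}) - y_i)$ for $t \in [0,1]$. This is valid because $\ell$ is $C^1$ by Assumption~\ref{assump:loss}. It gives
\[
\big|\ell(u, F_k^*(z_{k,i})) - \ell(u, y_i)\big|
\le \sup_t \|\nabla_v \ell(u, v_t)\|_\infty \, \|F_k^*(z_{k,i}) - y_i\|_1
\le M_{F^*}\, \|y_i - F_k^*(z_{k,i})\|_1 .
\]
This uses Hölder duality between $\ell_\infty$ and $\ell_1$ and the uniform gradient bound $\|\nabla_v \ell\|_\infty \le M_{F^*}$, which holds for all $(u,v) \in \mathbb{R}^C \times \mathbb{R}^C$. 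Because the bound is global, no neighborhood condition is needed on the segment. Here $y_i$ is identified with its vector encoding in $\mathbb{R}^C$ (one-hot in classification), exactly as in Lemma~\ref{lem:gap_A}.

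Finally, apply the triangle inequality to the average, which gives $|\frac1n\sum_i a_i| \le \frac1n \sum_i |a_i|$. Summing the pointwise bounds then yields the claim. The statement is deterministic and holds for every realization of the sample, so no probabilistic argument is needed.

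The only mildly delicate point is making sure the sup-norm gradient bound is applied on the straight segment between $y_i$ and $F_k^*(z_{k,i})$, and that the $\ell_\infty$/$\ell_1$ pairing is used rather than an $\ell_2$ bound. Both follow directly from the global form of Assumption~\ref{assump:loss}, so I do not expect a real obstacle.
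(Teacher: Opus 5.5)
Your proposal is correct and follows the paper's proof. The paper likewise uses $F(x_i)=F_k(z_{k,i})$ to get the per-sample bound $|\ell(F(x_i),y_i)-\ell(F_k(z_{k,i}),F_k^*(z_{k,i}))|\le M_{F^*}\|y_i-F_k^*(z_{k,i})\|_1$ from Lipschitz continuity of $\ell$ in its second argument and then averages over $i$; your mean-value and $\ell_\infty$/$\ell_1$ H\"older argument simply makes explicit how that Lipschitz property follows from Assumption~\ref{assump:loss}, a step the paper takes for granted.
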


\begin{proof}
For each validation sample $(x_i,y_i)$ with embedding $z_{k,i}=h_{\le k}(x_i)$,
\[
\big|\ell(F(x_i),y_i) - \ell(F_k(z_{k,i}),F_k^*(z_{k,i}))\big|
\le M_{F^*}\,\|y_i - F_k^*(z_{k,i})\|_1.
\]
Averaging over $i=1,\dots,n$ yields the result.
\end{proof}

\subsubsection{Concentration of \(T_k:=\mathcal W_1(\tilde P_k^Z,\hat{\tilde P}_{k,n}^Z)\) and of the empirical noise average}
\label{appendix: Concentration of wasserstein}

\paragraph{Motivation.}
The deterministic decomposition in Proposition~\ref{prop:decomp_app} reduces the
generalization gap to three terms. Among them, two depend explicitly on the
randomness of the empirical sample:
\begin{itemize}
  \item the Wasserstein distance 
  \(T_k=\mathcal W_1(\tilde P_k^Z,\hat{\tilde P}_{k,n}^Z)\), 
  which controls the oracle statistical gap (B);
  \item the empirical noise average 
  \(\bar u^{(k)} = \tfrac{1}{n}\sum_{i=1}^n \|y_i-F_k^*(z_{k,i})\|_1\), 
  which appears in the empirical model gap (C).
\end{itemize}
To obtain a high-probability generalization bound, it is therefore crucial to
quantify how much these quantities deviate from their expectations. We now prove
two concentration inequalities: a bounded-difference bound (McDiarmid) for \(T_k\),
and a Hoeffding bound for \(\bar u^{(k)}\).

\begin{proposition}[Concentration of \(T_k\) and \(\bar u^{(k)}\)]
\label{prop:concentration}
Let \(D_k := \sup_{z,z'\in \operatorname{supp}(\tilde P_k^Z)}\|z-z'\|_1 <\infty\) be the \(\ell_1\)-diameter
of the embedding support. Define \(T_k=\mathcal W_1(\tilde P_k^Z,\hat{\tilde P}_{k,n}^Z)\),
and \(\bar u^{(k)} = \frac{1}{n}\sum_{i=1}^n u_i^{(k)}\) with
\(u_i^{(k)}=\|y_i-F_k^*(z_{k,i})\|_1\). Assume \(u_i^{(k)}\in[0,2]\) for all \(i\)
(normalization as in the main text). Then for any \(\delta\in(0,1)\),
with probability at least \(1-\tfrac{\delta}{2(L+1)}\),
\begin{align}
T_k &\le \mathbb E[T_k] + D_k\sqrt{\tfrac{1}{2n}\log\tfrac{2(L+1)}{\delta}}, \label{eq:Tk_conc}\\
\bar u^{(k)} &\le \mathbb E[u^{(k)}] + \sqrt{\tfrac{2}{n}\log\tfrac{2(L+1)}{\delta}}. \label{eq:uk_conc}
\end{align}
\end{proposition}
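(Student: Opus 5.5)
The plan is to treat the two inequalities separately. Each follows from a standard bounded-difference argument applied to a function of the $n$ i.i.d.\ samples. We use a one-sided tail with failure probability $\delta' := \tfrac{\delta}{2(L+1)}$ for each. If the statement is read as the two bounds holding jointly, the same argument with $\delta'/2$ per event, or a union bound absorbed into the later union over the $L+1$ layers, gives it.

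For \eqref{eq:Tk_conc} we view $T_k = \Phi(z_{k,1},\dots,z_{k,n})$ with $\Phi(z_1,\dots,z_n) := \mathcal W_1\big(\tilde P_k^Z, \tfrac1n\sum_i \delta_{z_i}\big)$. The samples $z_{k,i}=F_{\le k}(x_i)$ are i.i.d.\ from $\tilde P_k^Z$ and lie in $\operatorname{supp}(\tilde P_k^Z)$ almost surely.

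\emph{Bounded differences.} Replace $z_j$ by $z_j'$, both in the support, producing empirical measures $\hat\mu$ and $\hat\mu'$. By the triangle inequality for $\mathcal W_1$, $|\Phi(\dots z_j\dots)-\Phi(\dots z_j'\dots)| \le \mathcal W_1(\hat\mu,\hat\mu')$. The coupling that matches $z_i$ to itself for $i\neq j$ and sends mass $1/n$ from $z_j$ to $z_j'$ shows $\mathcal W_1(\hat\mu,\hat\mu')\le \tfrac1n\|z_j-z_j'\|_1 \le D_k/n$.

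McDiarmid's inequality with $c_i=D_k/n$ then gives $\Pr(T_k-\mathbb E T_k\ge t)\le \exp(-2t^2/(n\cdot D_k^2/n^2)) = \exp(-2nt^2/D_k^2)$. Setting this equal to $\delta'$ yields $t = D_k\sqrt{\tfrac{1}{2n}\log\tfrac{2(L+1)}{\delta}}$, which is exactly \eqref{eq:Tk_conc}. Measurability of $T_k$ follows from Assumption~\ref{assump:measurable} and the continuity of $\Phi$ in its arguments.

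For \eqref{eq:uk_conc} the variables $u_i^{(k)}=\|y_i-F_k^*(z_{k,i})\|_1$ are i.i.d.\ functions of $(x_i,y_i)$, hence independent, and by hypothesis take values in $[0,2]$. Hoeffding's inequality with range $2$ gives $\Pr(\bar u^{(k)}-\mathbb E u^{(k)}\ge t)\le \exp(-2n t^2/4)=\exp(-nt^2/2)$. Solving $\exp(-nt^2/2)=\delta'$ gives $t=\sqrt{\tfrac{2}{n}\log\tfrac{2(L+1)}{\delta}}$, which matches \eqref{eq:uk_conc}.

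The only nontrivial step is the bounded-difference estimate for the Wasserstein term. There one must use the triangle inequality in $\mathcal W_1$, rather than the Lipschitz dual, together with the explicit single-point coupling, and must ensure that the replaced points stay in the support so that $D_k$ applies. The rest is direct substitution into McDiarmid and Hoeffding and bookkeeping of the confidence level.
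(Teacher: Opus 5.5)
Your proposal is correct and follows the paper's proof: McDiarmid with bounded differences $D_k/n$ for $T_k$, and Hoeffding with range $2$ for $\bar u^{(k)}$, with the same choices of $t$. The differences are cosmetic. You get the bounded-difference estimate from the $\mathcal W_1$ triangle inequality plus an explicit one-point coupling, whereas the paper uses the Kantorovich--Rubinstein dual; either works, so avoiding the dual is not actually ``necessary''. You also read the confidence level per inequality, which is consistent with the paper's later union bound over $2(L+1)$ events.
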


\begin{proof}

\textbf{Step 1: Bounded-difference inequality for \(T_k\).}
We use the Kantorovich-Rubinstein dual representation of \(W_1\):
\[
\mathcal W_1(\mu,\nu) = \sup_{\substack{f:\mathbb R^d\to\mathbb R\\ \mathrm{Lip}(f)\le 1}}
\Big\{\int f\,d\mu - \int f\,d\nu\Big\},
\]
with Lipschitz constant measured in the \(\ell_1\)-norm.  
Let the empirical measure be
\(\hat{\tilde P}_{k,n}^Z = \tfrac{1}{n}\sum_{i=1}^n\delta_{z_{k,i}}\).
Consider two samples \(S=(z_{k,1},\dots,z_{k,n})\) and \(S^{(j)}\) that differ
only in the \(j\)-th element. Denote
\(T_k(S)=\mathcal W_1(\tilde P_k^Z,\hat{\tilde P}_{k,n}^Z(S))\).  
Then
\[
|T_k(S)-T_k(S^{(j)})|
\le \tfrac{1}{n}\|z_{k,j}-z_{k,j}'\|_1 \le \tfrac{D_k}{n}.
\]
Thus \(T_k\) satisfies a bounded-difference property with sensitivity \(D_k/n\).
Applying McDiarmid's inequality gives, for any \(t>0\),
\[
\mathbb P\big(T_k - \mathbb E[T_k] \ge t\big)
\le \exp\!\Big(-\tfrac{2n t^2}{D_k^2}\Big).
\]
Choosing \(t = D_k\sqrt{\tfrac{1}{2n}\log\tfrac{2(L+1)}{\delta}}\) yields
\eqref{eq:Tk_conc}.

\medskip
\textbf{Step 2: Hoeffding bound for \(\bar u^{(k)}\).}
Each \(u_i^{(k)}\in[0,M]\) by Assumption \ref{assump:bayes_bounded}. By Hoeffding's inequality, for any \(t>0\),
\[
\mathbb P\big(\bar u^{(k)} - \mathbb E[\bar u^{(k)}] \ge t\big)
\le \exp\!\Big(-\tfrac{n t^2}{2}\Big).
\]
Choosing \(t = \sqrt{\tfrac{2}{n}\log\tfrac{2(L+1)}{\delta}}\) yields
\eqref{eq:uk_conc}.

\medskip
This completes the proof.
\end{proof}

\paragraph{Discussion.}
This result ensures that both the statistical fluctuation of the embedding
distribution (through \(T_k\)) and the empirical noise magnitude (through
\(\bar u^{(k)}\)) remain close to their expectations with high probability.
These concentration bounds are the key probabilistic ingredients needed to
convert the deterministic decomposition of the generalization gap into a
high-probability generalization bound.

\subsubsection{Combined deterministic and high-probability bound}

\paragraph{Motivation.}
We now combine the pieces developed above.  
Recall that the observed generalization gap
\[
R_0^{\mathrm{obs}} - \hat R_{0,n}^{\mathrm{obs}}
\]
was decomposed into three terms (Proposition~\ref{prop:decomp_app}).  
We provided deterministic bounds for each term (Lemmas~\ref{lem:gap_A}--\ref{lem:gap_C}), 
and then concentration inequalities for the random quantities \(T_k\) and \(\bar u^{(k)}\) (Proposition~\ref{prop:concentration}).  
Here we integrate these ingredients into a single high-probability generalization bound.

\begin{proposition}[High-probability control of the generalization gap]
\label{prop:combined_bound}
Assume Assumptions~\ref{assump:measurable}--\ref{assump:bayes_bounded}.  
Suppose that for each layer \(k\) there exist constants \(C_k>0\), arbitrarily small $\epsilon>0$ and \(d_k>0\) such that
\(\mathbb E[T_k]\le C_k n^{-1/(d_k+\epsilon)}\) for all sufficiently large \(n\).  
Fix confidence \(\delta\in(0,1)\). Then with probability at least \(1-\delta\), 
simultaneously for all layers \(k=0,\dots,L\) and any fixed predictors \(F\in\mathcal F\),
\begin{align}
R_0^{\mathrm{obs}} - \hat R_{0,n}^{\mathrm{obs}}
&\le L_k(g)\Big(C_k n^{-1/(d_k+\epsilon)} + D_k\sqrt{\tfrac{1}{2n}\log\tfrac{2(L+1)}{\delta}}\Big) \nonumber\\
&\quad + M_{F^*}\Big(2\,\mathbb E\|Y-F_k^*(Z)\|_1 + \sqrt{\tfrac{2}{n}\log\tfrac{2(L+1)}{\delta}}\Big).
\label{eq:combined_highprob_full}
\end{align}
In terms of rates and up to multiplicative constants, the bound can be summarized as
\[
\fbox{$\displaystyle
R_0^{\mathrm{obs}} - \hat R_{0,n}^{\mathrm{obs}}
\;\;\lesssim\;\;
L_k(g)\,n^{-1/(d_k+\epsilon)} \;+\; M_{F^*}\,\mathbb E\|Y-F_k^*(Z)\|_1 
\;+\; \sqrt{\tfrac{\log(L/\delta)}{n}}\,\big(L_k(g)D_k + M_{F^*}\big)
$}
\]
\end{proposition}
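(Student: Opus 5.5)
The plan is to fix a layer $k$ and start from the deterministic identity in Proposition~\ref{prop:decomp_app}. It splits the gap $R_0^{\mathrm{obs}}-\hat R_{0,n}^{\mathrm{obs}}$ into the three terms (A), (B), (C), and each term is bounded from above by its absolute value.

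For (A), apply Lemma~\ref{lem:gap_A}, which gives $M_{F^*}\,\mathbb E\|Y-F_k^*(Z)\|_1$. For (B), apply Lemma~\ref{lem:gap_B}, which gives $L_k(g)\,T_k$ with $T_k=\mathcal W_1(\tilde P_k^Z,\hat{\tilde P}_{k,n}^Z)$. For (C), apply Lemma~\ref{lem:gap_C}, which gives $M_{F^*}\,\bar u^{(k)}$. This produces the deterministic bound
\[
R_0^{\mathrm{obs}}-\hat R_{0,n}^{\mathrm{obs}}\le L_k(g)\,T_k+M_{F^*}\big(\mathbb E[u^{(k)}]+\bar u^{(k)}\big),
\]
which holds for every sample and every fixed $F$.

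Next I bring in the randomness. Proposition~\ref{prop:concentration}, for fixed $k$, gives the upper-tail bounds \eqref{eq:Tk_conc} for $T_k$ and \eqref{eq:uk_conc} for $\bar u^{(k)}$. I would read each as holding with failure probability at most $\delta/(2(L+1))$, so together they fail with probability at most $\delta/(L+1)$ for this $k$. A union bound over the $L+1$ layers $k=0,\dots,L$ then shows that all $2(L+1)$ events hold simultaneously with probability at least $1-\delta$. This explains the $\log\frac{2(L+1)}{\delta}$ factor. On this event, I substitute the hypothesis $\mathbb E[T_k]\le C_k n^{-1/(d_k+\epsilon)}$, valid for $n$ large enough; taking $n$ beyond the maximum of the finitely many thresholds over $k$ makes it hold for all layers at once. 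Since $\bar u^{(k)}\le \mathbb E[u^{(k)}]+\sqrt{\tfrac{2}{n}\log\tfrac{2(L+1)}{\delta}}$, the two $M_{F^*}$ contributions combine into $2\,\mathbb E\|Y-F_k^*(Z)\|_1$ plus the square-root term, which is exactly \eqref{eq:combined_highprob_full}. The boxed rate summary then follows by absorbing constants and grouping the $\sqrt{\log(L/\delta)/n}$ terms.

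The predictor $F$ is fixed before sampling, so no uniformity over $\mathcal F$ is required; ``any fixed $F$'' just means the event is defined for that $F$. Simultaneity over $k$ is the only thing the union bound has to provide.

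The main obstacle is bookkeeping in the probability budget and constants. The statement of Proposition~\ref{prop:concentration} is ambiguous about whether each inequality separately, or the pair jointly, holds with probability $1-\delta/(2(L+1))$. I would interpret it per inequality and check that the McDiarmid tail $\exp(-2nt^2/D_k^2)$ at the chosen $t$ equals $\delta/(2(L+1))$, and likewise for the Hoeffding tail, so that the total failure over $2(L+1)$ events is exactly $\delta$. A smaller point is that the Hoeffding range for $u_i^{(k)}$ is taken as $[0,2]$, which matches the constant $\sqrt{2/n}$. I would state that normalization explicitly rather than rely on the constant $M$ of Assumption~\ref{assump:bayes_bounded}.
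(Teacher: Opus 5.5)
Your proposal is correct and follows the paper's proof step for step: the decomposition of Proposition~\ref{prop:decomp_app}, the deterministic bounds from Lemmas~\ref{lem:gap_A}--\ref{lem:gap_C}, the tail bounds of Proposition~\ref{prop:concentration}, and then substitution of the rate hypothesis on $\mathbb E[T_k]$. The paper leaves three details implicit or handles them loosely: the explicit union bound over the $2(L+1)$ tail events, the common large-$n$ threshold across layers, and the $[0,2]$ range in the Hoeffding step (the paper appeals instead to the constant $M$ of Assumption~\ref{assump:bayes_bounded}, which would not give the stated $\sqrt{2/n}$ constant); you handle all three correctly.
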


\begin{proof}
\textbf{Step 1: Decomposition.}  
By Proposition~\ref{prop:decomp_app},
\[
R_0^{\mathrm{obs}} - \hat R_{0,n}^{\mathrm{obs}}
= (A) + (B) + (C).
\]

\textbf{Step 2: Deterministic bounds.}  
From Lemmas~\ref{lem:gap_A} -- \ref{lem:gap_C},
\[
R_0^{\mathrm{obs}} - \hat R_{0,n}^{\mathrm{obs}}
\le M_{F^*}\,\mathbb E\|Y-F_k^*(Z)\|_1 \;+\; L_k(g)\,T_k \;+\; M_{F^*}\,\bar u^{(k)}.
\]

\textbf{Step 3: Concentration.}  
By Proposition~\ref{prop:concentration}, with probability at least \(1-\delta\),
\[
T_k \le \mathbb E[T_k] + D_k\sqrt{\tfrac{1}{2n}\log\tfrac{2(L+1)}{\delta}}, 
\qquad
\bar u^{(k)} \le \mathbb E[u^{(k)}] + \sqrt{\tfrac{2}{n}\log\tfrac{2(L+1)}{\delta}}.
\]
Since \(\mathbb E[u^{(k)}]=\mathbb E\|Y-F_k^*(Z)\|_1\), substituting yields
\begin{align*}
R_0^{\mathrm{obs}} - \hat R_{0,n}^{\mathrm{obs}}
&\le L_k(g)\Big(\mathbb E[T_k] + D_k\sqrt{\tfrac{1}{2n}\log\tfrac{2(L+1)}{\delta}}\Big) \\
&\quad + M_{F^*}\Big(2\mathbb E\|Y-F_k^*(Z)\|_1 + \sqrt{\tfrac{2}{n}\log\tfrac{2(L+1)}{\delta}}\Big).
\end{align*}
Finally substitute \(\mathbb E[T_k]\le C_k n^{-1/(d_k+\epsilon)}\) to obtain \eqref{eq:combined_highprob_full}.
\end{proof}

\paragraph{Discussion.}
This bound highlights three components:
\begin{itemize}
  \item The \emph{statistical rate} \(L_k(g)\,C_k n^{-1/(d_k+\epsilon)}\) combines embedding geometry (via \(d_k\)) and oracle loss sensitivity (via \(L_k(g)\)).
  \item The \emph{noise/approximation terms} \(M_{F^*}\,\mathbb E\|Y-F_k^*(Z)\|_1\) arise from replacing discrete labels by the Bayes predictor.
  \item The \emph{concentration terms} scale as \(O(\sqrt{\tfrac{\log(L/\delta)}{n}})\), with constants depending on both distributional (\(M_{F^*}\)) and geometric (\(D_k\)) quantities.
\end{itemize}
Together, these yield an explicit and interpretable high-probability upper bound on the observed generalization gap.


\subsection{Recovering Network Effects via Lipschitz Constants}
\label{app:network_effects}

In the previous subsection, the oracle statistical gap (B) was controlled using the Lipschitz constant \(L_k(g)\) of the oracle loss.   
We now expand it to expose how the bound depends both on the network architecture (controllable) and on the data distribution (intrinsic).

\subsubsection{Expansion of \(L_k(g)\)}

From Lemma~\ref{lem:grad_bound_app},
\[
L_k(g) := \sup_{z\in U_k}\|\nabla g_k(z)\|_\infty
\;\le\; L_k(F)\,M_F + L_k(F^*)\,M_{F^*},
\]
where:
\begin{itemize}[leftmargin=*]
    \item \(L_k(F)\) is the Lipschitz constant of the tail sub-network from layer \(k\) to the output;
    \item \(L_k(F^*)\) is the Lipschitz constant of the Bayes predictor at layer \(k\);
    \item \(M_F\), \(M_{F^*}\) are uniform derivative bounds of the loss with respect to its two arguments.
\end{itemize}

\paragraph{Proof sketch.}
By the chain rule,
\[
\nabla g_k(z) 
= \nabla F_k(z)^\top \,\partial_F \ell(F_k(z),F_k^*(z))
+ \nabla F_k^*(z)^\top \,\partial_{F^*}\ell(F_k(z),F_k^*(z)).
\]
Applying the operator norm inequality and the uniform derivative bounds yields the stated inequality.

\subsubsection{Controllable vs.\ intrinsic contributions}

This decomposition separates the two sources of sensitivity:
\begin{itemize}[leftmargin=*]
    \item \textbf{Network-dependent term:}  
    \(L_k(F)\,M_F\), which is determined by the architecture and training of the tail network.  
    It can be reduced by explicit design choices (e.g., normalization layers, spectral norm constraints, Lipschitz regularization).
    
    \item \textbf{Distribution-dependent term:}  
    \(L_k(F^*)\,M_{F^*}\), which reflects the smoothness of the Bayes predictor relative to embeddings.  
    This term is intrinsic to the data distribution and cannot be improved by network design.
\end{itemize}

\subsubsection{Implication for the generalization bound}

Substituting the decomposition of \(L_k(g)\) into Proposition~\ref{prop:combined_bound} gives

\begin{align}
R_0^{\mathrm{obs}} - \hat R_{0,n}^{\mathrm{obs}}
&\le \big(L_k(F)\,M_F + L_k(F^*)\,M_{F^*}\big)\,
\Big(C_k n^{-1/(d_k+\epsilon)} + D_k\sqrt{\frac{1}{2n}\log\frac{2(L+1)}{\delta}}\Big) \nonumber\\
&\quad + \underbrace{\Big[ M_{F^*}\big(2\mathbb E\|Y-F_k^*(Z)\|_1 + \sqrt{\tfrac{2}{n}\log\tfrac{L}{\delta}}\big) \Big]}_{\text{Bayes surrogate terms}}.
\label{eq:combined_highprob_full_split}
\end{align}

Thus the final bound reflects two complementary mechanisms:
\begin{enumerate}[leftmargin=*]
    \item \emph{Embedding geometry:} the intrinsic dimension \(s_k\) governs the statistical rate of Wasserstein convergence;
    \item \emph{Network design:} the Lipschitz constant \(L_k(F)\) controls how embedding perturbations are amplified through the network;
    \item \emph{Bayes surrogate terms:} a residual contribution capturing the discrepancy between discrete labels and their Bayes predictor surrogate, including irreducible randomness.
\end{enumerate}

\section{Details of Experiments}
\label{detail: Experiments}

\subsection{Details of Section \ref{sec:synthetic_wass}}
\label{detail: synthetic_wass}

We conducted an experiment on MNIST to study how the Wasserstein distance between empirical embedding distributions depends on (i) the intrinsic dimension of the embeddings and (ii) the number of samples used to estimate the distributions.

\paragraph{Model and training.}
We trained simple fully connected autoencoders with a symmetric architecture. The encoder flattened each \(28\times28\) image and mapped it to 256 hidden units with ReLU activation, followed by a linear layer to a \(d\)-dimensional bottleneck. The decoder mirrored this with a linear layer back to 256 units, ReLU, and a final linear layer to 784 units. Training used mean squared error loss, the Adam optimizer with learning rate \(10^{-3}\), batch size 128, and 30 epochs. Global randomness was controlled by setting a fixed seed for both PyTorch and NumPy.

\paragraph{Data and embeddings.}
All data were drawn from the MNIST dataset. For the analysis of intrinsic dimension, we trained autoencoders with bottleneck sizes \(d\in\{16,32,64,128,256,512\}\). For the analysis of sample size, we trained a single autoencoder with bottleneck dimension 64 and repeatedly drew two independent subsets of size \(n\in\{100,200,\dots,1000\}\) to evaluate how the Wasserstein distance scales with \(n\). In all cases, embeddings from the training split were used as the empirical distribution, and embeddings from the test split were used as the population distribution.

\paragraph{Intrinsic dimension estimation.}
We estimated the intrinsic dimension using the maximum likelihood estimator of Levina and Bickel \citep{levina2004maximum}, implemented in \texttt{skdim}. 

\paragraph{Wasserstein distance.}
We measured discrepancies between embedding sets using an entropically regularized optimal transport cost (Sinkhorn distance). Uniform weights were assigned to all points, the ground cost was the Euclidean distance, and the regularization parameter was \(\varepsilon=10^{-2}\). Iterations terminated either after 200 steps or once the update magnitude fell below \(10^{-6}\). The resulting cost was computed as the expectation of the ground cost under the transport plan.

\subsection{Details of Section \ref{sec:cross_arch}}
\label{detail: cross_arch}

We conducted experiments on CIFAR-10 and CIFAR-100 to analyze how the final-layer embeddings relate to class-wise generalization gaps modified across ResNet architectures.

\paragraph{Model and training.} 
We considered five ResNet architectures: ResNet-18, 34, 50, 101, and 152. Each model was initialized with ImageNet-pretrained weights from \texttt{torchvision.models} and evaluated on CIFAR datasets. The architecture of these ResNet models was modified by adjusting the final linear output layer. Specifically, the output of the model's convolutional layers is initially projected to a 128-dimensional space via a linear layer. This is then followed by a Sigmoid activation function, and finally, another projection layer yields the ultimate output. These nets are finetuning on Cifar-10 and Cifar-100 used the Adam optimizer with weight decay $0.001$, base learning rate $10^{-4}$, and a cosine annealing schedule over 50 epochs. Batch size was 256, with random horizontal flip for augmentation. Multi-GPU training was enabled via \texttt{accelerate}. Models were saved after training and evaluated on the full test set.

\paragraph{Embedding extraction.} 
For each trained model, we extracted the \emph{last layer embeddings} for all samples in both validation and test splits. Embeddings were stored separately for each class to allow class-wise analysis. For the CIFAR-10 dataset, each class of embeddings in both the validation and test sets comprises 500 samples. In the case of the CIFAR-100 dataset, each class of embeddings in both the validation and test sets consists of 100 samples.

\paragraph{Intrinsic dimension estimation.} 
We estimated the intrinsic dimension of these embeddings using the maximum likelihood estimator of Levina and Bickel \citep{levina2004maximum}, as implemented in \texttt{skdim}. Estimates were computed independently for each class and averaged across samples, yielding 10 estimates per model on CIFAR-10 and 100 per model on CIFAR-100.

\paragraph{Wasserstein distance.} 
For each class, we computed the Wasserstein distance between validation and test embeddings. This used entropic-regularized optimal transport (Sinkhorn distance) with Euclidean ground cost, uniform weights, and regularization parameter $\epsilon=10^{-2}$. These distances quantify how far apart the validation and test embedding distributions are.

\paragraph{Generalization gap.} 
For each class and model, validation and test losses were recorded to compute the class-wise generalization gap.

\subsection{Details of Section \ref{sec:causal_width}}
\label{detail: causal_width}

We designed a experiment on MNIST to analyze how the width of a hidden layer influences intrinsic dimension of intermediate embeddings, Lipschitz properties of the network and generalization performance. The experiment uses a six-layer multilayer perceptron (MLP) with configurable hidden-layer widths and records both statistical and geometric properties of representations throughout training.

\paragraph{Model and training.}
The model is a fully connected network with architecture
\[
784 \to h_1 \to h_2 \to h_3 \to h_4 \to h_5 \to 10,
\]
where each hidden layer is followed by a ReLU activation. The default hidden width is 100 units for all layers. To study the effect of representation bottlenecks, we varied the width of the third hidden layer (\(h_3\)) over the list \(\{100,90,80,70,60,50,40,30,20,10\}\), while keeping all other layers fixed at 100. Training was performed with cross-entropy loss, the Adam optimizer (learning rate \(10^{-3}\), weight decay 0), batch size 128, and for 10 epochs. We used both validation and test splits of MNIST, with additional evaluation on a fixed random subset of 2048 validation samples. All randomness was controlled by fixed seeds and deterministic settings in PyTorch to ensure reproducibility.

\paragraph{Activation collection and intrinsic dimension.}
To measure representation complexity across layers we registered forward hooks after each ReLU activation. During evaluation, the hooks collected activations for all inputs in the 2048-sample subset. For each layer's activation matrix \(X\), we applied the maximum likelihood estimator (as implemented in \texttt{skdim}). 

\paragraph{Lipschitz estimation.}
To characterize stability of the mapping from each hidden layer to the output, we computed the product of spectral norms of all subsequent linear layers. For a given suffix starting at layer \(i\), the Lipschitz constant was approximated by
\[
L_{i \to \text{end}} \;=\; \prod_{j=i+1}^{L} \sigma_{\max}(W_j),
\]
where \(W_j\) denotes the weight matrix of linear layer \(j\) and \(\sigma_{\max}\) is its top singular value. Singular values were computed using \texttt{torch.linalg.svdvals} in double precision. These suffix-wise Lipschitz estimates were recorded at initialization and after each epoch.

\section{Dimensionality Estimation and Hyperparameter Analysis}
\label{supple: algorithm and hyperparameter}

In this appendix, we investigate the effects of hyperparameter choices and the specific algorithm used on the estimation of embedding dimensionality. All experiments are conducted using subsets of the CIFAR datasets: 500 samples per class for CIFAR-10 and 100 samples per class for CIFAR-100.

\subsection{Hyperparameter Analysis}


We first examine how the choice of the hyperparameter $K$ affects dimensionality estimates. Here, $K$ corresponds to the number of nearest neighbors used in the estimation procedure: larger $K$ values capture dimensionality over a broader range of the data, whereas smaller $K$ values reflect more local structure.

\begin{figure}[htbp] 
  \centering
  \includegraphics[width=0.99\linewidth]{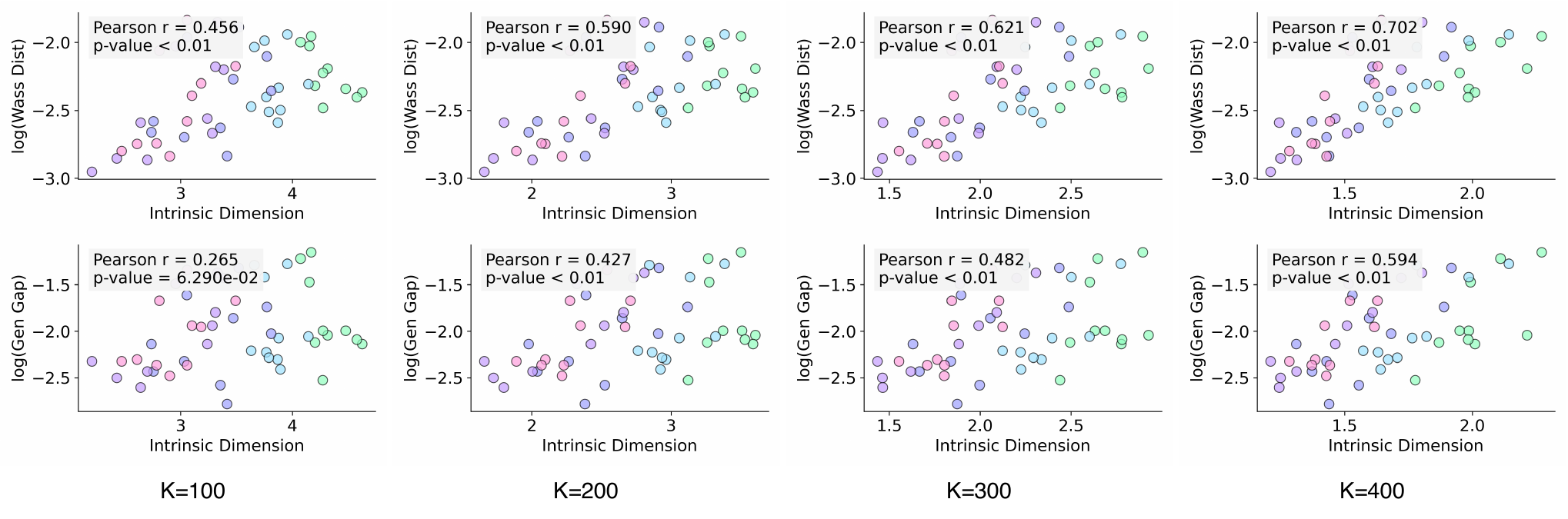}
  \caption{\textbf{Effect of hyperparameter $K$ on dimensionality estimation for CIFAR-10 embeddings.} Larger $K$ values capture broader data structure and lead to higher correlation with generalization error.}
  \label{fig:supple_cifar10_validK}
\end{figure}

\begin{figure}[htbp] 
  \centering
  \includegraphics[width=0.99\linewidth]{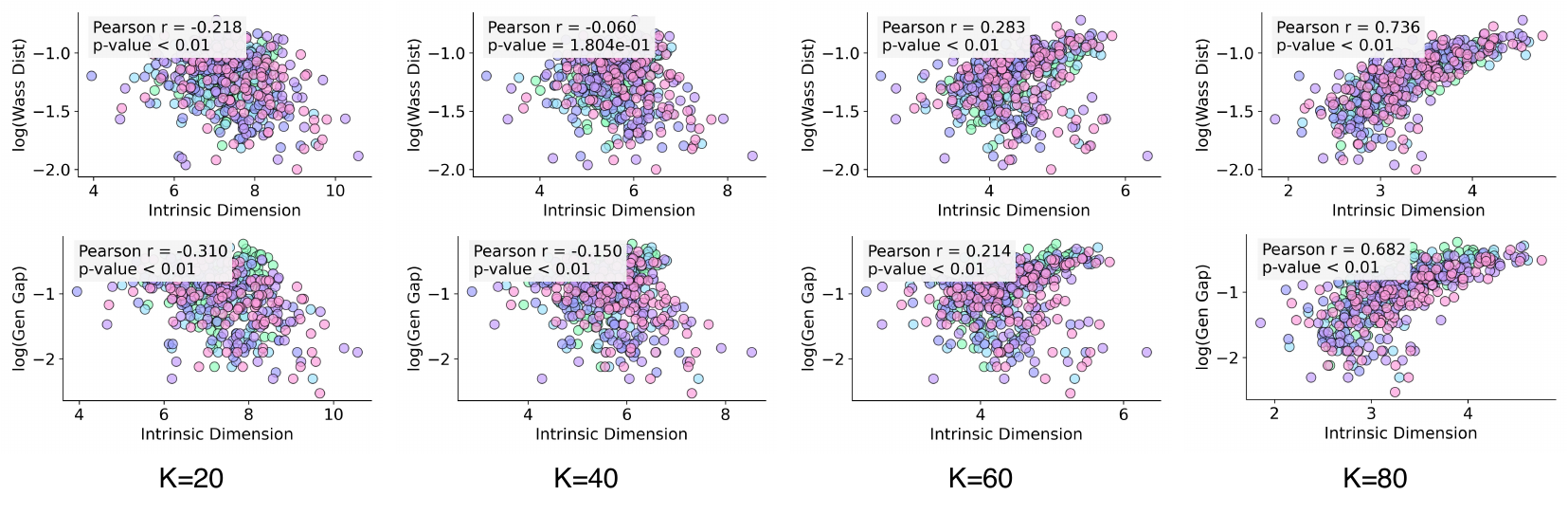}
  \caption{\textbf{Effect of hyperparameter $K$ on dimensionality estimation for CIFAR-100 embeddings.} Increasing $K$ improves the alignment between estimated dimensionality and generalization error, indicating that global structure is more informative.}
  \label{fig:supple_cifar100_validK}
\end{figure}

For CIFAR-10, we test $K = 100, 200, 300, 400, 500$, and for CIFAR-100, we test $K = 20, 40, 60, 80, 100$. Figures~\ref{fig:supple_cifar10_validK} and \ref{fig:supple_cifar100_validK} show the results. We observe that as $K$ increases, the estimated dimensionality better correlates with the generalization error. This indicates that the global dimensionality of a class is more predictive of generalization performance than local dimensionality.

\subsection{Algorithm Comparison}


Next, we compare different dimensionality estimation algorithms (TLE \citep{amsaleg2019intrinsic} and MOM \citep{amsaleg2018extreme}) while keeping the hyperparameter fixed ($K=400$ for CIFAR-10, $K=80$ for CIFAR-100).

\begin{figure}[htbp] 
  \centering
  \includegraphics[width=0.8\linewidth]{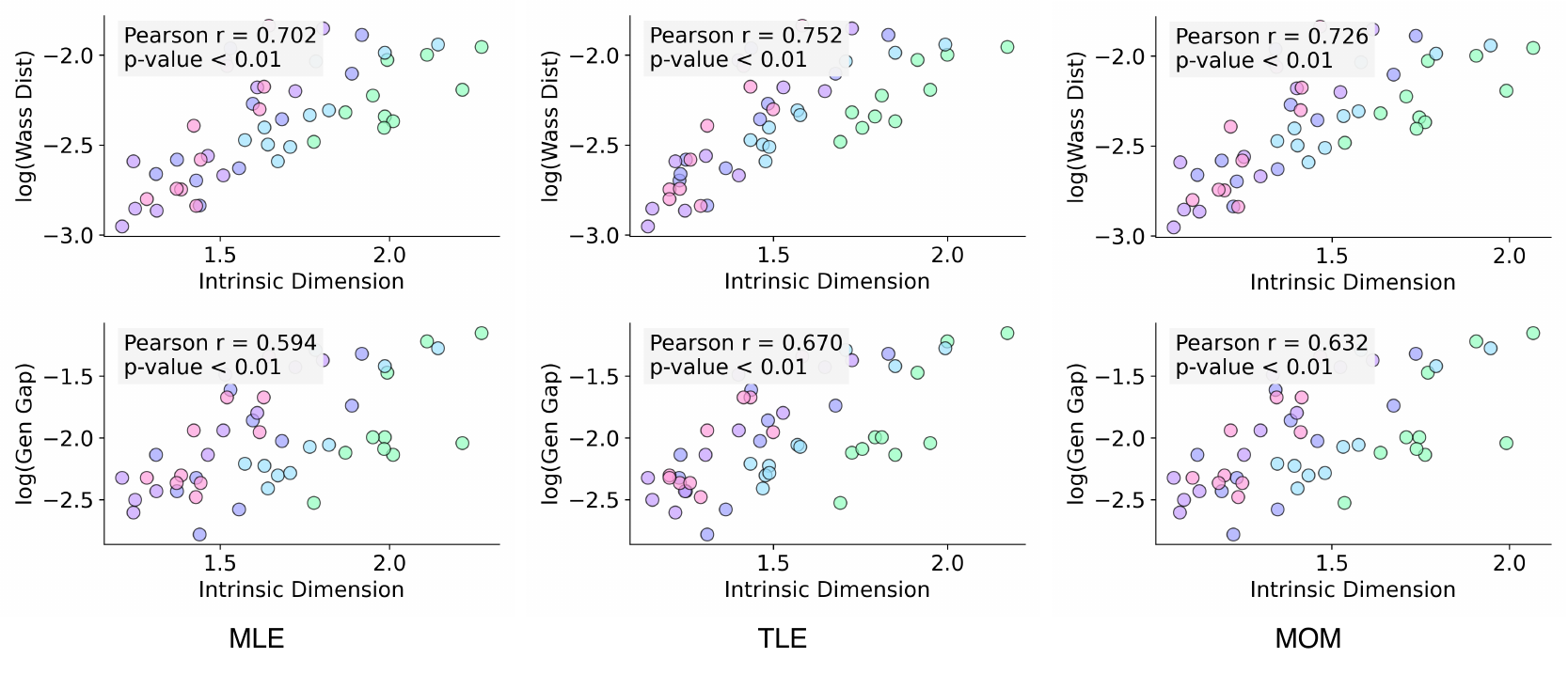}
  \caption{\textbf{Comparison of dimensionality estimation algorithms on CIFAR-10 embeddings.} Despite using different algorithms, estimated dimensionalities consistently correlate with generalization error, demonstrating robustness to method choice.}
  \label{fig:supple_cifar10_validAlgorithm}
\end{figure}

\begin{figure}[htbp] 
  \centering
  \includegraphics[width=0.8\linewidth]{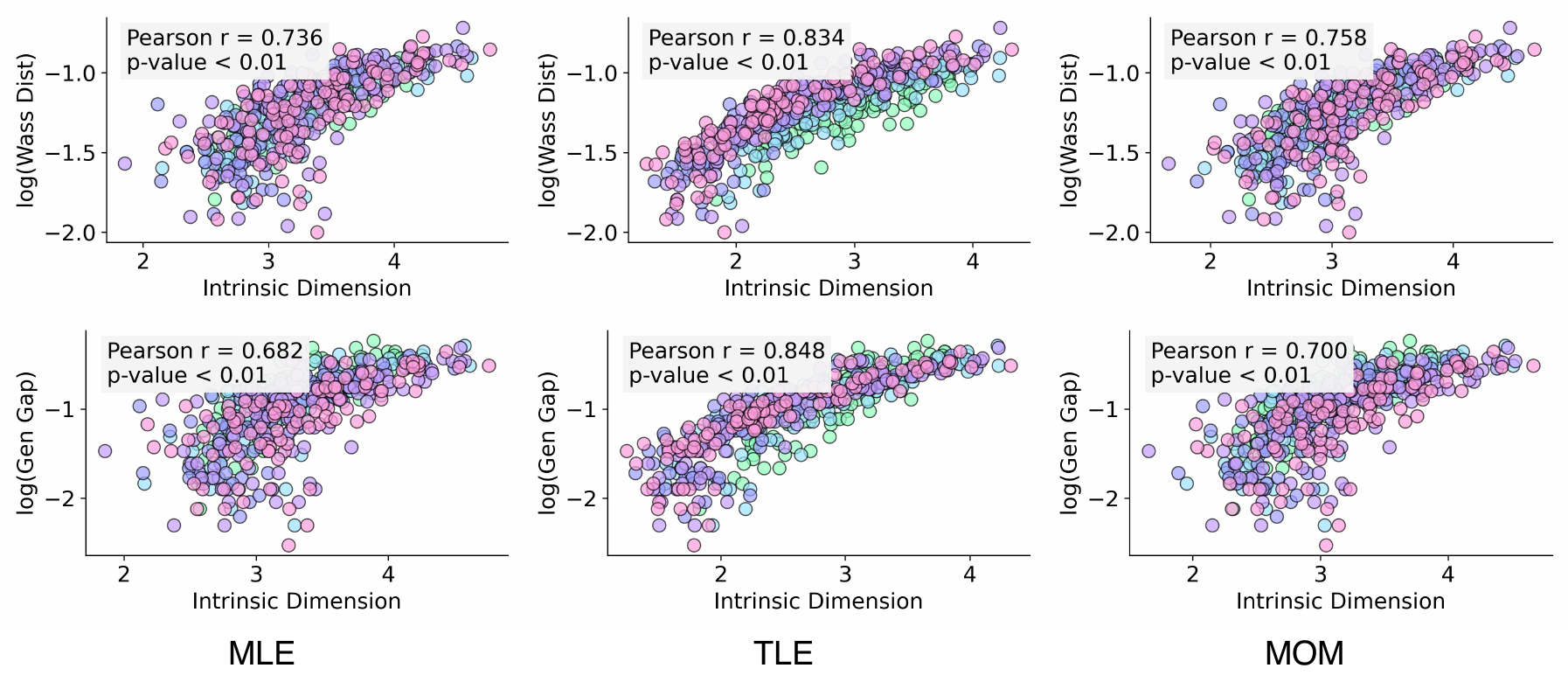}
  \caption{\textbf{Comparison of dimensionality estimation algorithms on CIFAR-100 embeddings.} Dimensionality estimates remain significantly associated with generalization error across different algorithms.}
  \label{fig:supple_cifar100_validAlgorithm}
\end{figure}

Figures~\ref{fig:supple_cifar10_validAlgorithm} and \ref{fig:supple_cifar100_validAlgorithm} present the results. Across both datasets, all algorithms yield estimated dimensionalities that remain significantly correlated with generalization error, suggesting that the observed relationship is robust to the choice of estimation method.

\subsection{Details of Large-Scale Pretrained Models}
\label{appendix: largemodel}

We provide details on the large-scale pretrained models used to evaluate embedding geometry, covering both vision and language modalities.  

\paragraph{Model families and pretraining regimes.}

We consider two modalities:

\begin{table}[htbp]
\centering
\caption{Top-1 accuracy of large-scale pretrained ConvNeXt models on ImageNet-1K.
Models span different architectural scales and pretraining regimes, including
ImageNet-12K, ImageNet-22K, and CLIP-style LAION pretraining.}
\label{tab:imagenet_models}
\begin{tabular}{l c}
\toprule
\textbf{Model} & \textbf{Top-1 Accuracy (\%)} \\
\midrule
ConvNeXt-Nano (IN-12K $\rightarrow$ IN-1K)      & 82.88 \\
ConvNeXt-Tiny (IN-12K $\rightarrow$ IN-1K)      & 84.19 \\
ConvNeXt-Tiny (IN-22K $\rightarrow$ IN-1K)      & 84.10 \\
ConvNeXt-Small (IN-12K $\rightarrow$ IN-1K)     & 85.32 \\
ConvNeXt-Small (IN-22K $\rightarrow$ IN-1K)     & 85.75 \\
ConvNeXt-Base (CLIP LAION $\rightarrow$ IN-1K) & 86.18 \\
ConvNeXt-Large-MLP (CLIP LAION $\rightarrow$ IN-1K) & 87.34 \\
ConvNeXt-Large (IN-22K $\rightarrow$ IN-1K)     & 87.46 \\
ConvNeXt-XLarge (IN-22K $\rightarrow$ IN-1K)    & 87.37 \\
ConvNeXt-XXLarge (CLIP LAION Soup $\rightarrow$ IN-1K) & 88.62 \\
\bottomrule
\end{tabular}
\end{table}

\begin{table}[htbp]
\centering
\caption{Accuracy of pretrained language models fine-tuned on the MNLI benchmark.
Models cover a wide range of architectures and parameter scales, including
BERT, DeBERTa, DistilBERT, DistilBART, and ALBERT variants.}
\label{tab:mnli_models}
\begin{tabular}{l c}
\toprule
\textbf{Model} & \textbf{MNLI Accuracy (\%)} \\
\midrule
BERT-Tiny                                   & 60.00 \\
DistilBERT-Base                             & 82.00 \\
SciBERT                                     & 83.45 \\
BERT-Base                                  & 84.20 \\
ALBERT-Base-v2                              & 85.01 \\
BERT-Large                                 & 86.40 \\
DistilBART (12-1)                           & 87.08 \\
DistilBART (12-3)                           & 88.10 \\
DistilBART (12-6)                           & 89.19 \\
DistilBART (12-9)                           & 89.56 \\
DeBERTa-Large                               & 91.30 \\
DeBERTa-XLarge                              & 91.50 \\
\bottomrule
\end{tabular}
\end{table}

This collection allows us to probe the geometric--generalization relationship
across modalities, architectures, and pretraining scales.

\paragraph{Methodological considerations.}
\begin{itemize}[leftmargin=*]
    \item Embeddings from large pretrained models are not strictly i.i.d.\ due to prior training on massive datasets.  
    \item To approximate the Wasserstein discrepancy between empirical and population embedding distributions, we split the data into two disjoint subsets: one treated as the empirical sample, the other as a proxy for the population distribution.  
    \item Training-set performance is not accessible, so generalization gaps are not directly computed; test accuracy is used as a proxy for downstream performance.  
\end{itemize}

\paragraph{Metrics.}
Wasserstein distances are computed between disjoint subsets of embeddings, separately for vision and language models, to estimate finite-sample approximation error under a fixed pretrained model.

\subsection{Layer-wise Correlations among Dimension, Wasserstein Distance and Generalization Performance}
\label{appendix: layerwise}

We analyzed embeddings from ResNet-152 at layers 4, 18, 30, 43, 55, 67, 79, 91, 103, 115, 127, 139, and 152, and computed the correlation between embedding dimensionality, Wasserstein distances on the validation and test sets, and generalization error.

Correlations are relatively weak in early layers but increase in deeper layers, with a pronounced rise after layer 140. This suggests that deeper embeddings more faithfully capture features relevant to generalization.

\begin{figure}[htbp]
\centering
\includegraphics[width=0.5\linewidth]{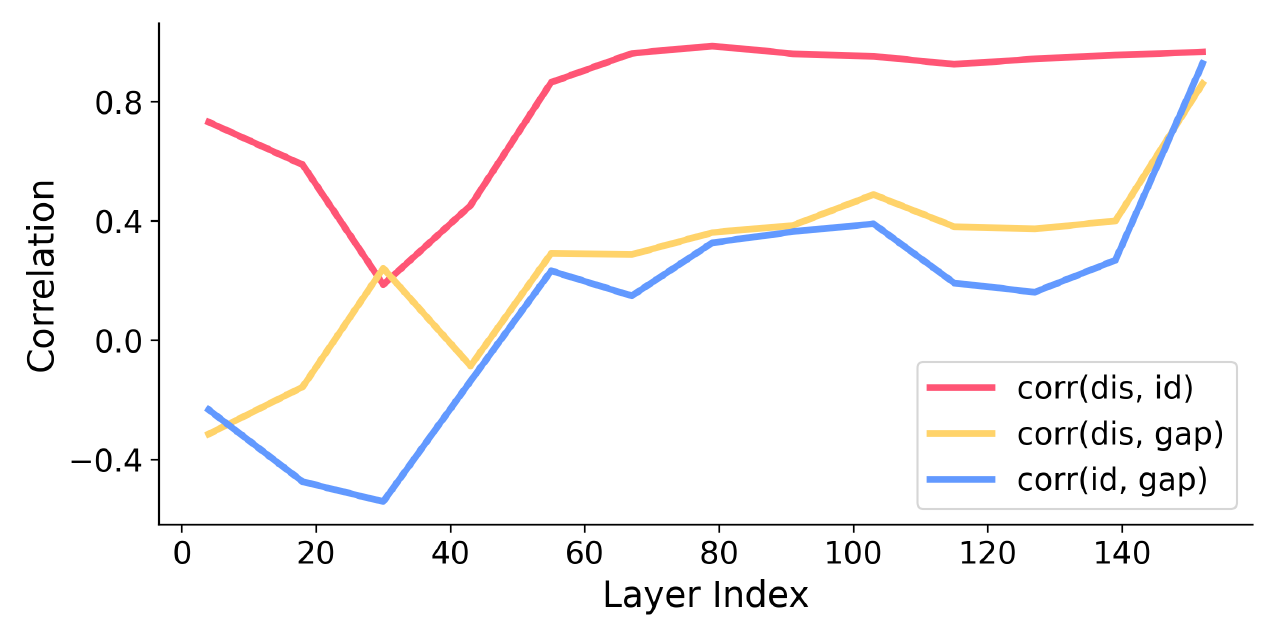}
\caption{\textbf{Layer-wise correlations between embedding dimensionality, Wasserstein distance, and generalization error in ResNet-152.} Deeper layers exhibit stronger correlations, indicating the increasing alignment between representation properties and generalization.}
\label{fig:supple_layer_corr}
\end{figure}







\end{document}